\documentclass{article}

    \PassOptionsToPackage{numbers, compress}{natbib}

 \usepackage[preprint]{neurips_2026}

\usepackage[utf8]{inputenc} 
\usepackage[T1]{fontenc}    
\usepackage{hyperref}       
\usepackage{url}            
\usepackage{booktabs}       
\usepackage{amsfonts}       
\usepackage{nicefrac}       
\usepackage{microtype}      
\usepackage{xcolor}         

\usepackage{mathtools}
\usepackage{amsmath}
\usepackage{amsthm}
\newtheorem{lemma}{Lemma}
\newtheorem{proposition}{Proposition}
\newtheorem{corollary}{Corollary}
\newtheorem{theorem}{Theorem}
\usepackage{graphicx}
\usepackage{soul}
\usepackage{amssymb} 
\usepackage{hhline}
\usepackage{multirow}
\usepackage[table,xcdraw]{xcolor}
\usepackage{threeparttable}
\usepackage{array}
\usepackage{adjustbox}
\usepackage{caption}
\usepackage{tikz}

\usepackage{colortbl}
\usepackage{amsfonts}
\newcolumntype{L}[1]{$>${\raggedright\let\newline\\\arraybackslash\hspace{0pt}}m{#1}}
\newcolumntype{C}[1]{$>${\centering\let\newline\\\arraybackslash\hspace{0pt}}m{#1}}
\newcolumntype{R}[1]{$>${\raggedleft\let\newline\\\arraybackslash\hspace{0pt}}m{#1}}

\usepackage{algorithm}
\usepackage{algpseudocode}
\usepackage{tabularx} 
\usepackage{subcaption}
\usepackage{wrapfig} 
\usepackage{booktabs} 
\usepackage[most]{tcolorbox}
\definecolor{lightgray}{gray}{0.95} 

\usepackage[most]{tcolorbox}
\usepackage{xcolor}
\usepackage{amsthm}
\usepackage{enumitem}

\newtcolorbox{findingbox}{
  enhanced,
  colback=white,                 
  colframe=green!60!black,       
  boxrule=0.9pt,                 
  arc=3mm,                       
  left=5pt,right=5pt,          
  top=5pt,bottom=5pt,            
  boxsep=0pt,
}

\theoremstyle{definition} 

\theoremstyle{remark}

\theoremstyle{remark}

\title{Don’t Let Bandit Feedback Pull Continual LLM-Recommender Updates Off Target}

\author{%
Taesan Kim$^{1}$ \quad
Hyeongjun Yun$^{1}$ \quad
Jaegul Choo$^{2}$ \quad
Chung Park$^{1}$\thanks{Corresponding Author}
\\
$^{1}$SK Telecom \quad
$^{2}$KAIST
\\
\texttt{\{ktmountain,hjyoon,skt.cpark\}@sk.com} \quad
\texttt{jchoo@kaist.ac.kr}
}

\begin{document}

\maketitle

\begin{abstract}
Generative LLM-based recommenders (LLM-Rec) require continual post-deployment updates, yet deployment logs provide only policy-shaped contextual bandit feedback: outcomes are observed solely for items exposed by a prior serving policy, inducing exposure bias and yielding partial, asymmetric signals consisting of relatively reliable positive responses and ambiguous no-responses. 
We propose an \textbf{Anchored Bandit Policy Optimization (ABPO)} framework for continual LLM-Rec updates that combines group-relative policy optimization (GRPO) with explicit treatment of exposure bias and feedback ambiguity. 
Specifically, we insert the exposed recommendation as a logged anchor into each GRPO rollout group, so that group-relative normalization is calibrated against the action actually exposed by the prior policy rather than against newly sampled rollouts alone.
Because both positive- and no-responses are observed only through prior-policy exposure, we apply self-normalized inverse propensity scoring to the fixed anchor for both feedback types to correct for policy mismatch. 
At the same time, we treat the two feedback types asymmetrically in reliability: positive responses provide relatively direct endorsement signals, whereas no-responses remain ambiguous because they may reflect either true disinterest or unobserved external factors.
To avoid overly aggressive updates from ambiguous no-responses, we temper their penalties with self-certainty, using the model's output-token confidence as a verifier-free reliability signal.
Across five domains from Amazon Reviews and MovieLens, our method yields consistent post-update gains in recommendation accuracy while mitigating prior-policy-induced exposure bias more effectively than prior baselines.
A real-world online A/B test further shows that our method achieves strong CTR and gradual improvement over successive production updates, demonstrating its practical effectiveness in production~\footnote[1]{We released our data and code to support reproducibility.}.

\end{abstract}
\section{Introduction}

Recent LLM-based generative recommender models (LLM-Rec) are rapidly gaining adoption in large-scale recommendation systems, as they can flexibly consume user histories expressed in natural language and generate not only item recommendations but also human-readable rationales~\citep{chen2024softmax, park2026more, park2025towards}. 
However, in real-world deployment, an LLM-Rec operates within a closed feedback loop—\textit{recommendation} $\rightarrow$ \textit{exposure} $\rightarrow$ \textit{user response} $\rightarrow$ \textit{next recommendation}—so its effective data distribution is no longer fixed and instead co-evolves with user behavior over time. 
Practical factors such as user interest drift, seasonality, and changes in UI or exposure mechanisms can therefore quickly invalidate a policy that was optimal at deployment. 
As a result, the LLM-Rec must be continually updated from post-deployment feedback, making continual updating a central challenge for sustaining recommendation quality in production~\citep{yin2025clicks,min2025ctr}.

A fundamental challenge in continual updating is that post-deployment feedback lacks ground-truth labels. 
Instead, it arrives as \textbf{policy-shaped contextual bandit feedback}, where observations are induced by the prior serving policy. Because the logs record outcomes only for exposed items, the resulting data are inherently affected by exposure bias~\citep{han2025reinforcement,min2025ctr}.
Consequently, na\"{i}vely updating the model on such logs can create a self-reinforcing feedback loop, in which the model becomes increasingly biased toward items that were historically favored by the prior policy. 
Moreover, each logged sample provides only a partial and unpaired signal---either an implicit positive response or a silent no-response to the exposed recommendation~\citep{zhang2025leveraging}. 
Existing approaches typically address only one side of this difficulty: some update the model using only positive-response samples, which tends to reinforce previously exposed successes~\citep{han2025reinforcement}, while others treat no-response as an explicit negative signal, which oversimplifies the inherently ambiguous nature of user disengagement~\citep{min2025ctr,zhang2025leveraging}. 
As a result, repeated updates on such biased and ambiguous feedback can progressively deteriorate both recommendation accuracy and diversity relative to the no-update model (Fig.~\ref{fig:intro_1}).

\begin{wrapfigure}[10]{r}{0.6\textwidth}  
\vspace{-1.5em} 
  \centering
  \includegraphics[width=0.6\textwidth]{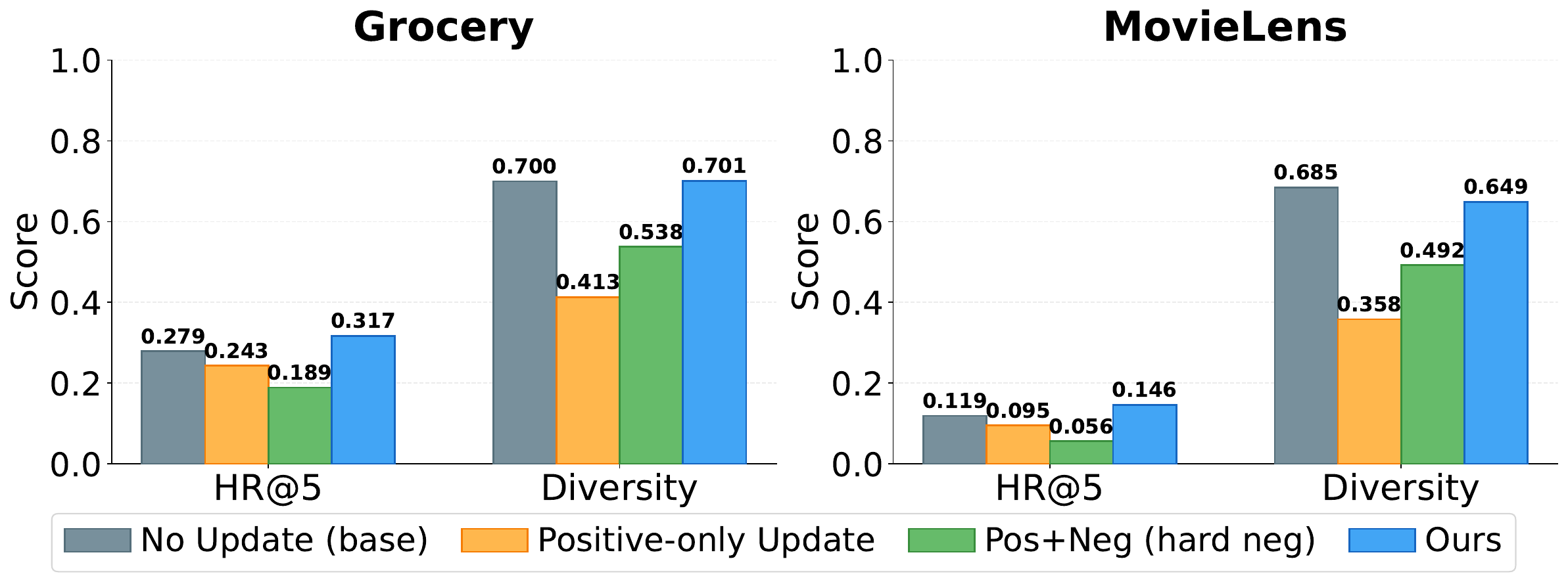}
\caption{
RLVR-based policy updates on contextual-bandit logs degrade both recommendation accuracy and diversity.
}
  \label{fig:intro_1}
\end{wrapfigure}

To address these challenges, we propose an \textbf{Anchored Bandit Policy Optimization (ABPO)} framework for continually updating generative LLM-based recommenders.
Our method builds on the Reinforcement Learning with Verified Rewards (RLVR) paradigm and adopts Group Relative Policy Optimization (GRPO), a representative update rule for RLVR-based LLM-Rec training.
For each logged context, we insert the exposed recommendation as a fixed anchor into the rollout group, so that group-relative normalization is calibrated against the actual action taken by the prior serving policy.
Rather than treating the exposed recommendation as a ground-truth label to imitate, we use it as a \emph{calibration reference} that grounds each update in the policy-shaped feedback actually observed at deployment.
The anchor automatically shifts the group reward baseline according to its observed feedback: positive-response anchors receive high item-matching rewards and raise the baseline, whereas no-response anchors receive low rewards and lower it.
This suppresses blind self-reinforcement on positive exposures and buffers overly aggressive penalties on no-response exposures, stabilizing group-relative updates under biased and partial bandit feedback.

Within this anchored framework, we correct the logged-anchor contribution for exposure bias using self-normalized inverse propensity scoring (SNIPS).
Since both positive- and no-responses are observed only for items exposed by the prior policy, both feedback types inherit off-policy exposure bias; accordingly, we compute SNIPS corrections separately for positive- and no-response logged anchors within each mini-batch.
At the same time, the two feedback types differ in reliability.
A positive response provides a relatively direct endorsement signal, whereas a no-response is inherently ambiguous and may reflect weak interest, exposure effects, or simple inattention rather than explicit rejection.
Thus, treating every no-response as a hard negative can over-penalize otherwise plausible recommendations.
We therefore add a separate \emph{self-certainty} reward for no-response cases, using the model's output confidence as an intrinsic reliability signal when external negative evidence is ambiguous.
This makes deployment feedback bias-corrected and reliability-aware rather than supervised ground truth, enabling stable continual adaptation without overfitting to historically exposed actions or noisy no-response signals.

We evaluate our method on the Amazon Reviews and MovieLens datasets by constructing an initial offline training set together with a simulated post-deployment contextual bandit feedback stream. 
Starting from state-of-the-art LLM-Rec backbones, we update the deployed model using the same feedback logs under (1) our method and (2) widely adopted feedback-driven updating baselines from prior work, and then compare post-update performance on subsequent test data against the pre-update model. 
Across domains, our method achieves strong post-update gains in recommendation accuracy while also improving recommendation diversity (i.e., reducing popularity bias), outperforming existing approaches in most settings.
Beyond offline evaluation, ABPO has been deployed in our real-world service, where it generally achieves higher click-through-rate than competing update strategies while showing gradual improvement over successive production updates.
It remains actively used in production, demonstrating its practical effectiveness under partial observability and off-policy logging.

\vspace{-0.3cm}
\section{Background and Motivation}
\vspace{-0.2cm}
\subsection{Contextual Bandit Feedback}
\vspace{-0.2cm}
In real-world deployment, feedback logs collected over time are inherently off-policy because they are not passive records of user preference, but interaction data generated under the intervention of the recommender policy that was live at the time.
Under this mismatch, na\"{\i}vely treating logged bandit feedback as the ground-truth labels causes the model to learn not the true user preference but the exposure distribution induced by the prior serving policy.
Because feedback is observable only for items the historical policy chose to expose, reinforcing positive-response logs encourages the model to repeatedly regenerate items that were frequently shown and happened to succeed, yielding strong \emph{self-reinforcement} and amplifying popularity bias while suppressing long-tail items.
Conversely, interpreting a no-response as an explicit rejection is even more problematic: no-responses can arise from limited attention, UI placement and presentation effects, time constraints, or simple absence of reaction, so treating them as hard negatives introduces abundant false negatives and drives overly aggressive, biased updates that collapse diversity.

\begin{wrapfigure}[11]{r}{0.44\textwidth}  
\vspace{-2.1em} 
  \centering
  \includegraphics[width=0.44\textwidth]{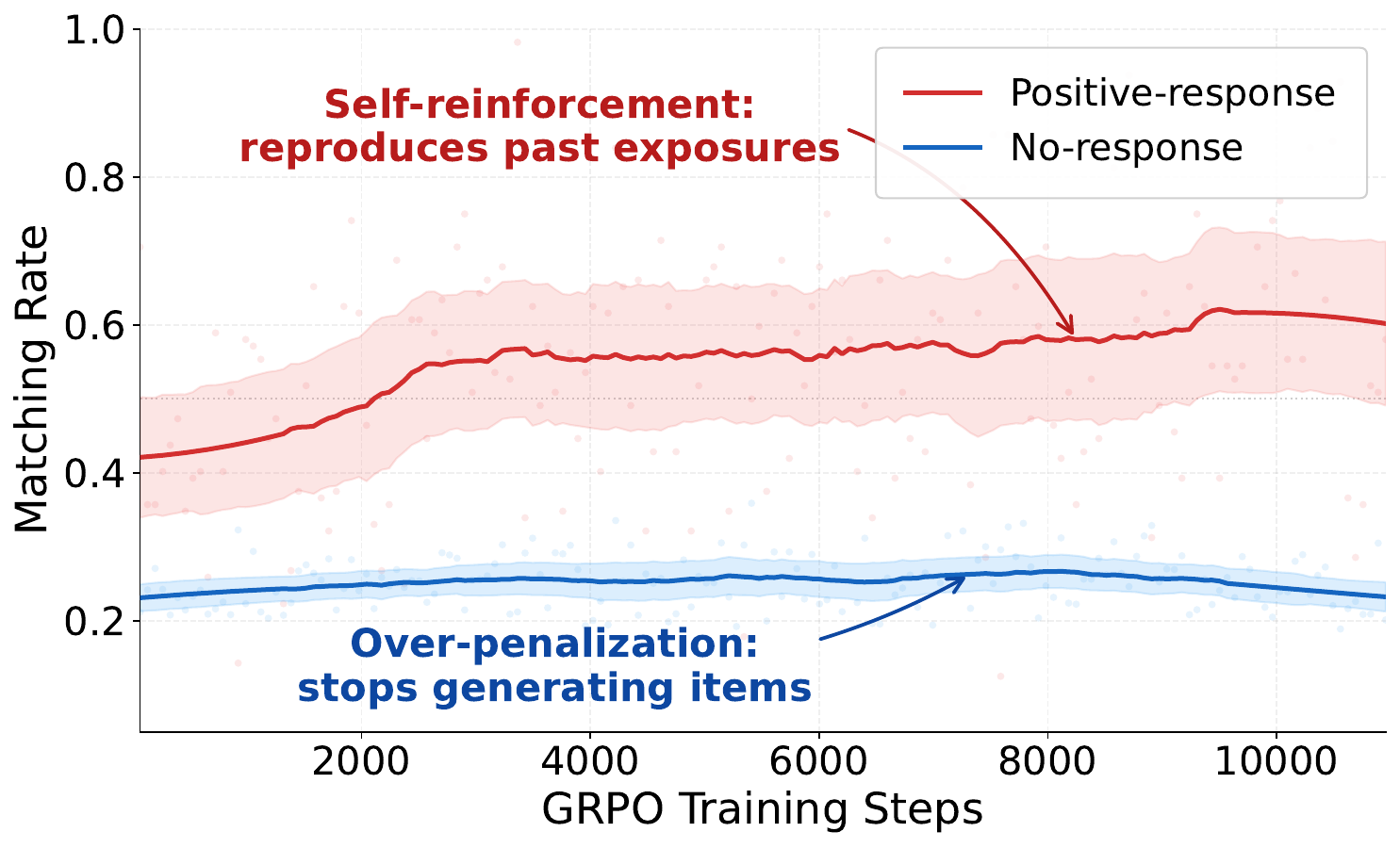}
\caption{
Item-matching reward values for exposed items during GRPO update training.
}
  \label{fig:intro}
\end{wrapfigure}

Figure~\ref{fig:intro} substantiates these failure modes.
When we repeatedly apply Group Relative Policy Optimization (GRPO) across successive post-deployment update rounds, where the model is periodically updated with newly accumulated contextual-bandit feedback, under a reward design that assigns +1 when the model matches a positive-response exposure and -1 when it matches a no-response exposure, the matching rate for positive-response exposures steadily increases, approaching $\sim$60\% in later updates.
This indicates that the policy progressively reproduces items that the prior policy exposed and that happened to receive positive feedback.
In contrast, the matching rate for no-response exposures quickly decays toward a small value, as repeated penalization makes the policy stop generating those items altogether.
Together, these trends indicate that contextual-bandit updating can over-amplify past exposed successes while over-penalizing ambiguous disengagement, ultimately degrading generalization and exacerbating popularity bias.
These observations motivate the need to correct the bias of contextual-bandit logs by explicitly accounting for the asymmetric reliability of positive- versus no-response feedback.

\vspace{-0.3cm}
\subsection{LLM-Rec Update Setting}
\vspace{-0.2cm}

To formalize (1) how we first train the deployed recommender via supervised learning and then (2) continually update it using post-deployment feedback, we begin by defining the recommendation task and the deployment-feedback setting.

\textbf{Definition 1. User Context and Supervised Next-Item Prediction.}
For a user, let $\mathbf{x}_t = (x_1, x_2, \dots, x_t)$ denote the interaction history up to time $t$, where each $x_i$ represents a previously interacted item together with its associated metadata and chronological context.
In the supervised setting, the ground-truth next item is denoted by $x_{t+1} \in \mathcal{V}$, where $\mathcal{V}$ is the item set.
The supervised training dataset is defined as \setlength{\fboxsep}{0pt}\colorbox[HTML]{DAE8FC}{$\mathcal{D}_{\mathrm{sup}} = \{(\mathbf{x}_t, x_{t+1})\}$.}
We obtain an initial deployed policy $\pi_{\theta_{\mathrm{init}}}$ by training a generative recommender on $\mathcal{D}_{\mathrm{sup}}$ using a standard LLM training recipe, consistent with prior LLM-Rec training paradigms used in works such as Rec-R1~\citep{lin2025rec} and TWiCE-Rec~\citep{park2026think}, resulting in the objective $\mathcal{L}_{\mathrm{sup}}=\mathbb{E}_{(\mathbf{x}_t, x_{t+1}) \in \mathcal{D}_{\mathrm{sup}}}\left[\log \pi_{\theta_{\mathrm{init}}}(x_{t+1}\mid \mathbf{x}_t)\right].$

\textbf{Definition 2. Post-Deployment Contextual Bandit Feedback.}
After deployment, \(\pi_{\theta_{\mathrm{init}}}\) no longer receives full
next-item supervision for every candidate action. For each later context
\(\mathbf{x}_{t'}=(x_1,\dots,x_{t'})\) with \(t'>t\), a candidate set
\(\mathcal C_{t'}\subseteq\mathcal V\) is specified by the deployment or
evaluation protocol, and the logging policy samples
\(a^{\log}\in\mathcal C_{t'}\) from \(e_0(\cdot\mid \mathbf{x}_{t'})\). Here,
\(e_0(a\mid \mathbf{x}_{t'})\) abbreviates the candidate-conditional
probability \(e_0(a\mid \mathbf{x}_{t'},\mathcal C_{t'})\); we omit
\(\mathcal C_{t'}\) for readability. In our offline experiments,
\(\mathcal C_{t'}\) is instantiated as described in
Appendix~\ref{app:dataset_overall}.
The environment then returns partial feedback \(y\in\{1,0\}\) only for the exposed item, where \(y=1\) denotes an observed positive response, such as a click, and \(y=0\) denotes a no-response.
Importantly, \(y=0\) is not treated as a deterministic negative label or as a mismatch with a fixed ground-truth next item; it may also result from non-examination, timing, position bias, or other unobserved factors.
The resulting post-deployment bandit dataset is defined as
\setlength{\fboxsep}{0pt}\colorbox[HTML]{DAE8FC}{$
\mathcal{D}_{\mathrm{bandit}}
=
\{(\mathbf{x}_{t'}, a^{\log}, y, e_0(a^{\log}\mid \mathbf{x}_{t'}))\}
$},
where \(e_0(a^{\log}\mid \mathbf{x}_{t'})\) is the item-level logging propensity, i.e., the probability that the logging policy exposes item \(a^{\log}\) under context \(\mathbf{x}_{t'}\).
In our offline bandit construction, this propensity is known because the logged recommendation is sampled from a controlled stochastic item-level recommendation distribution.
Concretely, for each candidate item \(a\in\mathcal C_{t'}\), we compute \(s_0(a,\mathbf{x}_{t'})\), the length-normalized average token log-probability assigned by the LLM to the textual representation of item \(a\), and define
$
e_0(a\mid \mathbf{x}_{t'})
=
\frac{\exp(s_0(a,\mathbf{x}_{t'})/\tau)}
{\sum_{b\in\mathcal C_{t'}}\exp(s_0(b,\mathbf{x}_{t'})/\tau)}.
$
The logged recommendation is then sampled as
$
a^{\log}\sim \mathrm{Categorical}(e_0(\cdot\mid \mathbf{x}_{t'})).
$
Thus, the stored propensity is the normalized item-level exposure probability induced by the softmax distribution over candidate items, not the raw token-level likelihood of the generated recommendation text.
Unlike \(\mathcal{D}_{\mathrm{sup}}\), \(\mathcal{D}_{\mathrm{bandit}}\) is policy-shaped: feedback is observed only for the recommendation exposed by the logging policy, while counterfactual outcomes for unexposed items remain unobserved.
Therefore, the dataset is inherently partial and biased by the exposure pattern induced by \(e_0\).

\textbf{Definition 3. Continual Policy Update Problem Setting.}
Given an initial policy $\pi_{\theta_{\mathrm{init}}}$ trained on $\mathcal{D}_{\mathrm{sup}}$, we collect a post-deployment contextual-bandit dataset
$
\mathcal{D}_{\mathrm{bandit}}
$.
Our goal is to train an updated policy $\pi_{\theta}$ that improves future post-deployment recommendations.
Ideally, this corresponds to maximizing
$
J(\pi_{\theta})
=
\mathbb{E}_{\mathbf{x}\sim p_{\mathrm{deploy}}}
\mathbb{E}_{a\sim\pi_{\theta}(\cdot\mid\mathbf{x})}
[r(\mathbf{x},a)],
$
where $p_{\mathrm{deploy}}$ is the context distribution underlying $\mathcal{D}_{\mathrm{bandit}}$, and $r(\mathbf{x},a)$ is the latent user reward.
However, $\mathcal{D}_{\mathrm{bandit}}$ provides feedback only for the logged action $a^{\log}$ exposed by $\pi_{\theta_{\mathrm{init}}}$, making the update problem one of learning from partial, off-policy contextual-bandit evidence rather than supervised ground truth.
However, this optimization must rely solely on $\mathcal{D}_{\mathrm{bandit}}$, which provides only partial supervision---feedback is observed only for recommendations previously exposed by $\pi_{\theta_{\mathrm{init}}}$.
Thus, our problem is to continually update a deployed policy from policy-shaped contextual bandit feedback, rather than to solve a standard supervised next-item prediction problem.

\vspace{-0.3cm}
    \begin{figure*}[]
    \begin{center}
    \includegraphics[width=0.9999\linewidth]{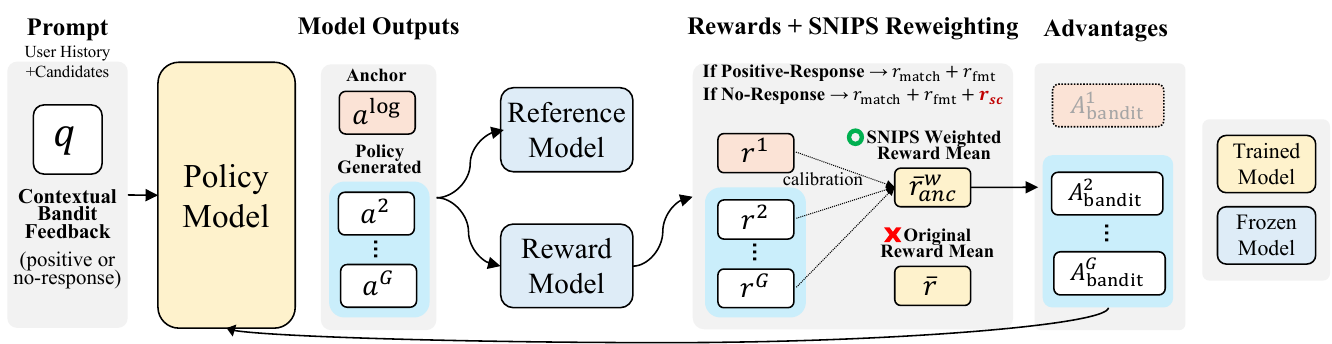}
    \end{center}
\caption{Overview of Anchored Contextual Bandit Reinforcement Learning}
    \label{fig:feedback_framework}
    \end{figure*}

\vspace{-0.1cm}
\section{Anchored Bandit Policy Optimization (ABPO)}
\vspace{-0.3cm}

\subsection{From Vanilla GRPO to Contextual Bandit Feedback}
\vspace{-0.2cm}
RLVR methods such as Group Relative Policy Optimization (GRPO)~\citep{shao2024deepseekmath, guo2025deepseek} serve as an effective framework for continual model updates, allowing the generative policy to be iteratively refined by contrasting diverse recommendation candidates against task-specific reward signals~\citep{yin2025clicks}.
Formally, in the LLM-Rec update task, we construct a prompt $q$ consisting of the user context $\mathbf{x}_t$ and a candidate recommendation item set, and ask the model to generate the most suitable item for the target user.
Then, given a prompt \(q\), the policy \(\pi_{\theta_{\text{old}}}\) samples \(G\) outputs \(\{a^{j}\}_{j=1}^{G}\).
Here, $\pi_{\theta_{\text{old}}}$ is the policy from the previous GRPO step, whereas $\pi_{\theta_{\text{init}}}$ is the initially deployed model---these two are distinct.
Assuming \(K\) reward functions, the aggregated reward for the \(j\)-th rollout is computed as
\begin{equation}
r_{\text{sum}}^{j} = r_{1}^{j} + r_{2}^{j} + \cdots + r_{K}^{j},
\label{eq:grpo_reward_sum_app}
\end{equation}
where \(r_k^{j}\) denotes the \(k\)-th reward evaluated on rollout \(a^{j}\).
In recommendation, a commonly used reward is the \setlength{\fboxsep}{0pt}\colorbox[HTML]{DAE8FC}{item-matching reward}, which assigns $+1$ if a rollout $a^{j}$ matches a positive-response exposure, $-1$ if it matches a no-response exposure, and $0$ otherwise.
The group-relative advantage for the \(j\)-th rollout is then computed by normalizing the aggregated rewards within the rollout group:
\begin{equation}
A^{j}_{\text{sum}} =
\frac{
r_{\text{sum}}^{j} - \mathrm{mean}\{ r_{\text{sum}}^{1}, \ldots, r_{\text{sum}}^{G}\}
}{
\mathrm{std}\{ r_{\text{sum}}^{1}, \ldots, r_{\text{sum}}^{G} \}
}.
\label{eq:grpo_advantage_app}
\end{equation}
The corresponding multi-reward GRPO optimization objective is
\begin{equation}
\scalebox{0.92}{$
\mathcal{J}_{\mathrm{GRPO}}(\theta) = 
\mathbb{E}_{\{a^{j}\}_{j=1}^G \sim \pi_{\theta_{\mathrm{old}}}(\cdot|q)} 
\left[
\frac{1}{G} \sum_{j=1}^{G}
\min \left(
s^{j}(\theta)\, A^{j}_{\text{sum}},\;
\mathrm{clip}(s^{j}(\theta), 1 - \epsilon, 1 + \epsilon)\,A^{j}_{\text{sum}}
\right)
\right],
$}
\label{eq:grpo_app}
\end{equation}
where $s^{j}(\theta) = \frac{\pi_{\theta}(a^{j} \mid q)}{\pi_{\theta_{\mathrm{old}}}(a^{j} \mid q)}$ and \(\epsilon\) denotes the clipping threshold. 
For simplicity, we leave out the KL-divergence regularizer from this formulation.
Since $a^{\log}$ is simply the item exposed by the prior deployed policy under partial observability, treating it as a pseudo ground-truth target can create a self-reinforcing feedback loop.
Therefore, an effective update method should explicitly account for contextual-bandit bias and calibrate how strongly the exposed recommendation influences policy optimization.
Fig.~\ref{fig:feedback_framework} illustrates the architecture of our framework.

\vspace{-0.3cm}
\subsection{Anchor-Augmented Rollout Group}
\label{subsec:anchor_group}
\vspace{-0.3cm}

To account for the fact that all logged feedback is filtered through the exposure mechanism of the prior serving policy, we modify GRPO's rollout-group construction by injecting the exposed recommendation into each rollout group as a fixed anchor.
For each logged prompt $q$, we generate a rollout group of size $G$ under the previous-step policy $\pi_{\theta_{\text{old}}}$, denoted by $\{a^{1}, a^{2}, \ldots, a^{G}\}$, and explicitly replace the first rollout with the exposed recommendation, i.e., \setlength{\fboxsep}{0pt}\colorbox[HTML]{DAE8FC}{$a^{1}=a^{\log}$}.
The remaining \(G-1\) rollouts are sampled from the current rollout policy: $a^{j} \sim \pi_{\theta_{\text{old}}}(\cdot \mid q), \;\; j=2,\ldots,G$.
This anchor-augmented construction is the key departure from vanilla GRPO. 
Rather than evaluating newly sampled rollouts in isolation, this design places them in direct comparison with the historically exposed recommendation under the same logged context.
As a result, the exposed recommendation becomes part of the reward statistics used for group-relative normalization, allowing the mean reward of the group to reflect the behavioral baseline induced by the previous policy.

This design is particularly important under policy-shaped contextual bandit feedback.
Without anchoring, a sampled rollout rarely matches the logged exposure in a large item space, making the item-matching reward extremely sparse.
Anchoring ensures that each group contains at least one rollout tied to the actually observed feedback, so the sparse bandit signal reliably enters group-relative normalization.
When the exposed recommendation receives a positive response, anchoring $a^{\log}$ makes it a high-reward reference, raising the group mean and reducing the relative advantage of rollouts that simply copy the logged action.
This discourages blind self-reinforcement of the incumbent policy's exposure tendency.
Conversely, when the exposed recommendation receives no response, anchoring $a^{\log}$ makes it a low-reward reference, lowering the group mean and preventing penalties on alternative rollouts from being overly amplified.
Thus, the anchor serves as a context-dependent calibration point for reward normalization under sparse contextual-bandit feedback.

\vspace{-0.2cm}
\subsection{Self-Normalized Inverse Propensity Scoring (SNIPS) Reweighting}
\vspace{-0.2cm}

For the $i$-th example in a mini-batch, the logged anchor $a_i^{\log}$ is an exposed recommendation produced by the initially deployed logging policy $\pi_{\theta_{\mathrm{init}}}$, whereas the non-anchor rollouts $a_i^{2{:}G}$ are sampled from the previous-step GRPO policy $\pi_{\theta_{\mathrm{old}}}$. 
Therefore, the logged anchor reflects the historical exposure distribution and is used only to calibrate the group-relative baseline, not as an additional on-policy rollout in the GRPO surrogate.
To account for this exposure mismatch, we use a Self-Normalized Inverse
Propensity Scoring (SNIPS)-style weight only when the logged anchor contributes
to the group baseline. For each logged anchor, we define the item-level
propensity ratio
\begin{equation}
w_i^{\log}
=
\mathbf{sg}
\left[
\frac{
e_{\mathrm{old}}(a_i^{\log}\mid q_i)
}{
e_0(a_i^{\log}\mid q_i)
}
\right],
\label{eq:ips_weight}
\end{equation}
where \(e_0(a_i^{\log}\mid q_i)\) denotes the item-level logging propensity
under the initial logging policy, and \(e_{\mathrm{old}}(a_i^{\log}\mid q_i)\)
denotes the item-level exposure probability assigned to the same logged item
by the rollout policy used to generate the current group.
Both propensities are computed over the same candidate item set \(\mathcal C_i\).
Specifically, for \(p\in\{0,\mathrm{old}\}\), we compute a length-normalized
LLM token score \(s_p(a,q_i)\) for each \(a\in\mathcal C_i\), and define
$
e_p(a\mid q_i)
=
\frac{\exp(s_p(a,q_i)/\tau)}
{\sum_{b\in\mathcal C_i}\exp(s_p(b,q_i)/\tau)}.
$
Thus, the ratio in Eq.~\eqref{eq:ips_weight} is an item-level propensity ratio,
not a ratio of raw token-level likelihoods. 
The stop-gradient operator
\(\mathbf{sg}[\cdot]\) prevents the propensity ratio from directly contributing
gradients to the policy update.
Since positive- and no-responses differ in reliability and scale,
we self-normalize the weights separately for the two feedback types.
Let $\mathcal{B}_{1}$ and $\mathcal{B}_{0}$ denote the sets of positive-response
and no-response anchors in the mini-batch, respectively. For each feedback
type $y\in\{1,0\}$, define
\begin{equation}
\bar{w}^{\log}_{y}
=
\frac{1}{|\mathcal{B}_{y}|}
\sum_{i\in\mathcal{B}_{y}} w_i^{\log},
\qquad
\hat{w}_{i}^{\log}
=
\frac{w_i^{\log}}{\bar{w}^{\log}_{y}+\delta}.
\label{eq:snips_weight_split}
\end{equation}

The resulting weight is used to form a SNIPS weighted reward mean:
\begin{equation}
\bar r^{\,w}_{i,\mathrm{anc}}
=
\frac{
\hat{w}_{i}^{\log} r_i^{\log}
+
\sum_{j=2}^{G} r_i^j
}{
\hat{w}_{i}^{\log}+G-1
}.
\label{eq:weighted_anchor_baseline}
\end{equation}
The advantages used in the GRPO surrogate are then computed only for the policy-sampled rollouts:
$
A_{i,\mathrm{bandit}}^{j}
=
(r_i^j-\bar r^{\,w}_{i,\mathrm{anc}}) / \sigma_i
,
\;\; j=2,\ldots,G,
$
where $\sigma_i$ is the weighted standard deviation of the anchor-augmented reward group, computed with the same anchor weight used in $\bar r^{\,w}_{i,\mathrm{anc}}$.
Specifically,
$
\sigma_i
=
\sqrt{
\frac{
\hat w_i^{\log} \left(r_i^{\log}-\bar r^{\,w}_{i,\mathrm{anc}}\right)^2
+
\sum_{j=2}^{G}
\left(r_i^j-\bar r^{\,w}_{i,\mathrm{anc}}\right)^2
}{
\hat w_i^{\log}+G-1
}
+\epsilon_{\mathrm{std}}
}.
$
Thus, SNIPS is applied to both positive- and no-response anchors because
both are historical exposures generated by the logging policy. However, the
logged anchor affects learning only through the weighted group baseline; it
does not receive a separate PPO-ratio term or direct off-policy surrogate
update. The sampled non-anchor rollouts are optimized with the standard
GRPO likelihood ratio
$
s_i^j(\theta)
=
\frac{
\pi_{\theta}(a_i^j\mid q_i)
}{
\pi_{\theta_{\mathrm{old}}}(a_i^j\mid q_i)
},
\;\; j=2,\ldots,G.
$

\vspace{-0.4cm}
\subsection{Asymmetric Bandit-Aware Reward Construction}
\vspace{-0.2cm}
In recommendation, the two feedback cases in contextual bandit logs pose fundamentally different challenges: positive-response events primarily suffer from off-policy bias, whereas no-response events cannot be safely interpreted as clean negatives.
Motivated by this asymmetry, we design an asymmetric update mechanism, where positive- and no-response events affect policy optimization through distinct GRPO objective instantiations.

\vspace{-0.3cm}
\subsubsection{Reward Function}
\vspace{-0.2cm}
\paragraph{Positive-Response Case.}
For positive-response events, the exposed recommendation $a^{\text{log}}$ corresponds to an item to which the user responded favorably after being exposed by the historical policy.  
First, we define the item-matching reward as
$
r_{\text{match}} = \mathbb{I}\!\left[a^{j} = a^{\text{log}}\right],
$
which provides a binary signal indicating whether the rollout coincides with the logged action. 
This captures a core objective in recommendation, namely, improving recommendation accuracy.
Second, we define the format compliance reward $r_{\text{fmt}}$ as a binary signal for whether the output follows the prescribed schema: the item ID must appear within \texttt{<item\_id> ... </item\_id>} and the item title within \texttt{<item> ... </item>}.
In industrial generative recommenders, such format compliance is not an auxiliary preference but a minimum requirement for reliable deployment.
We aggregate these reward components as in Eq.~\eqref{eq:grpo_reward_sum_app}, and then normalize the aggregated rewards (i.e., $r_{\text{match}}+r_{\text{fmt}}$) over all anchor-augmented rollouts according to Eq.~\eqref{eq:grpo_advantage_app} to compute the advantage \(A_{\text{sum}}^{j}\).

\vspace{-0.3cm}
\paragraph{No-Response Case.}
In this case, a rollout should be penalized when it matches $a^{\text{log}}$, as it reproduces an exposure that did not receive user engagement.
Accordingly, we define the item-matching reward as
$
r_{\text{match}}^{j} = -\mathbb{I}\!\left[a^{j} = a^{\text{log}}\right],
$
which assigns a penalty when the rollout coincides with the exposed recommendation.
This reward discourages the policy from repeatedly reproducing recommendations associated with no-response outcomes.
However, a no-response is not a clean rejection.
It may reflect weak interest, limited exposure, timing effects, or simple inattention, rather than explicit negative preference.
Thus, relying only on the item-matching penalty can over-penalize plausible recommendations under ambiguous feedback.
To mitigate this issue, we incorporate self-certainty~\citep{zhao2025learning} as an auxiliary intrinsic reward for no-response events.
Self-certainty provides a verifier-free confidence signal from the model itself, allowing the update to consider whether a generated recommendation is internally coherent and confidently supported by the policy when external negative evidence is ambiguous.
Specifically, for a rollout $a^{j}$, its self-certainty is defined as
$
r_{\mathrm{sc}}^{j}
=
\frac{1}{T}
\sum_{t=1}^{T}
\mathrm{KL}\!\left(
U \,\|\, \pi_{\theta}(\cdot \mid q, a^{j}_{<t})
\right),
$
where \(U\) denotes the uniform distribution over the token vocabulary, and \(a^{j}_{<t} = (a^{j}_1,\dots,a^{j}_{t-1})\) denotes the token prefix before the \(t\)-th token.
A larger value indicates a more concentrated output distribution and hence higher model confidence.
Therefore, self-certainty does not replace the no-response penalty; rather, it tempers updates by complementing the sparse and ambiguous item-matching signal with an intrinsic reliability signal.
Compared with entropy- or perplexity-based confidence measures, self-certainty is less sensitive to length bias and avoids directly encouraging degenerate low-entropy outputs, making it a more suitable sequence-level confidence signal for this setting~\citep{zhao2025learning}.
As in the positive-response case, the format compliance reward is also applied to no-response events.
We aggregate the resulting reward components as
$
r^{j}
=
r_{\text{match}}^{j}
+
r_{\text{fmt}}^{j}
+
\lambda_{\mathrm{sc}} r_{\mathrm{sc}}^{j},
$
and normalize them across all anchor-augmented rollouts to compute the advantage \(A_{\text{sum}}^{j}\).

\vspace{-0.3cm}

\subsection{Anchor-Constrained Optimization Objective}
\label{subsec:final_objective}
\vspace{-0.2cm}
With the SNIPS-weighted anchored baseline defined above, the final training
objective keeps the same GRPO surrogate form, but applies it only to
the policy-sampled non-anchor rollouts. 
The final objective is
\begin{equation}
\scalebox{0.85}{$
\mathcal{J}_{\mathrm{ABPO}}(\theta)
=
\mathbb{E}_{i,\; a_i^{2{:}G}\sim \pi_{\theta_{\mathrm{old}}}(\cdot\mid q_i)}
\left[
\frac{1}{G-1}
\sum_{j=2}^{G}
\min\left(
s_i^j(\theta)\,A_{i,\mathrm{bandit}}^j,
\;
\mathrm{clip}(s_i^j(\theta),1-\epsilon,1+\epsilon)\,
A_{i,\mathrm{bandit}}^j
\right)
\right],
$}
\label{eq:final_bagrpo}
\end{equation}
where
$
s_i^j(\theta)
=
\frac{
\pi_{\theta}(a_i^j\mid q_i)
}{
\pi_{\theta_{\mathrm{old}}}(a_i^j\mid q_i)
}.
$
Therefore, our method preserves the optimization backbone of GRPO while
modifying two core components to fit logged recommendation feedback:
(1) the group-relative baseline is anchor-calibrated using the historically
exposed action, without treating it as an additional on-policy rollout in the surrogate; and
(2) the scalar utility used for group-relative comparison is redesigned to
treat positive- and no-response logs differently.
This yields a simple and practical extension of GRPO for continual updating
from bandit feedback in generative recommendation.
Note that mini-batches are stratified to include both positive- and
no-response examples, which stabilizes SNIPS normalization and optimization
under severe feedback imbalance.

\vspace{-0.3cm}
\paragraph{Remark: Baseline Calibration by Anchored Rollouts.}
Fix a logged context $q$ and its exposed action $a^{\log}$.
In our anchored GRPO update, the logged action is not treated as an
additional on-policy rollout in the surrogate. Instead, it is used
only to calibrate the group-relative reward baseline. Specifically, let
$a^{2{:}G}\overset{\mathrm{iid}}{\sim}\pi_{\theta_{\mathrm{old}}}(\cdot\mid q)$
denote the policy-sampled rollouts used in the current GRPO step, and let
$r^{\log}=r(a^{\log})$ and $r^j=r(a^j)$.
Without propensity weighting, the anchored baseline is
$
\bar r_{\mathrm{anc}}
=
\frac{1}{G}r^{\log}
+
\frac{G-1}{G}\bar r_{\mathrm{roll}},
\;\;
\bar r_{\mathrm{roll}}
=
\frac{1}{G-1}\sum_{j=2}^{G} r^j .
$
Taking expectation over the policy-sampled rollouts gives
$
\mathbb{E}_{a^{2{:}G}\sim \pi_{\theta_{\mathrm{old}}}}
[\bar r_{\mathrm{anc}}]
=
\frac{1}{G}r^{\log}
+
\frac{G-1}{G}V^{\pi_{\theta_{\mathrm{old}}}}(q).
$
Thus, compared with the standard GRPO baseline, the group baseline is
pulled toward the realized logged reward by a factor $1/G$.

This calibration acts asymmetrically across feedback types. 
A positive-response anchor, with high $r^{\log}$, raises the baseline and thus reduces the advantage of rollouts that merely replicate the exposed item. 
A no-response anchor, with lower $r^{\log}$, lowers the baseline and avoids over-penalizing alternative rollouts under ambiguous feedback. 
In both cases, the anchor influences learning only through baseline calibration, not through a direct off-policy surrogate term.

\vspace{-0.3cm}
\section{Experiments}
\vspace{-0.2cm}
In this section, we address the following research questions:
\setlength{\fboxsep}{0pt}\colorbox[HTML]{DAE8FC}{\textbf{RQ1}}. Does our update method improve recommendation accuracy under continual updates from policy-shaped contextual bandit feedback?
\setlength{\fboxsep}{0pt}\colorbox[HTML]{DAE8FC}{\textbf{RQ2}}. Does our framework mitigate popularity bias and improve distributional generalization under continual updates?
\setlength{\fboxsep}{0pt}\colorbox[HTML]{DAE8FC}{\textbf{RQ3}}. Does logged-action anchoring improve the stability and effectiveness of vanilla GRPO updates under contextual bandit feedback?
\setlength{\fboxsep}{0pt}\colorbox[HTML]{DAE8FC}{\textbf{RQ4}}. Is asymmetric treatment of positive-and no-response feedback necessary?
\setlength{\fboxsep}{0pt}\colorbox[HTML]{DAE8FC}{\textbf{RQ5}}. Does our method increase CTR over baselines in online A/B tests?

\vspace{-0.2cm}
\subsection{Datasets} \label{subsection:datasets}
\vspace{-0.2cm}
To validate the effectiveness of our proposed framework, we conduct experiments across five distinct subsets of the \textbf{Amazon Reviews 2023} dataset~\citep{hou2024bridging} and the \textbf{MovieLens} benchmark~\citep{harper2015movielens}. 
These datasets encompass a wide array of recommendation contexts, ranging from e-commerce retail interactions to cinematic consumption patterns; comprehensive data distributions and statistics are available in Appendix~\ref{app:dataset_overall}.

\vspace{-0.2cm}
\subsection{Baselines and Evaluation Settings} \label{subsection:setup}
\vspace{-0.2cm}
We evaluate our approach using multiple SOTA LLM-Rec backbones: \textbf{Rec-SAVER}, \textbf{Rec-R1}, \textbf{GDPO}, and \textbf{$\text{G}^2$RPO}. 
Initially, every backbone is trained on a supervised next-item prediction task using its original methodology to ensure specialized learning from user interaction histories. 
We then simulate post-deployment updates by refining these trained models with subsequently collected contextual-bandit feedback: the deployed logging policy first exposes recommendations, and the resulting positive- and no-response signals are used as partial feedback (see Appendix~\ref{app:dataset_overall} for details).
Our study compares several update strategies, including \textbf{DEALRec}, \textbf{RL-Rec}, \textbf{UL-Rec}, and our proposed method, across all backbones. 
The final recommendation performance is measured on a chronologically distinct test set for every backbone-update pair. 
Full implementation details are provided in Appendix~\ref{sec:implementation_details}.

\vspace{-0.2cm}
\begin{table}[] \scriptsize
\centering
\caption{Accuracy comparison on \textit{\textbf{Amazon}} and \textit{\textbf{MovieLens}}; the best and second-best results are marked in \textbf{bold} and \underline{underline}, respectively.
}
\setlength{\tabcolsep}{0.8pt}
\renewcommand{\arraystretch}{1.0}
\label{tab:result_1}
\begin{tabular}{c|cl|ccc|ccc|ccc|ccc|ccc}
\toprule
\multirow{3}{*}{\textbf{\begin{tabular}[c]{@{}c@{}}LLM-Rec\\ Backbone\end{tabular}}} & \multicolumn{2}{c|}{\textbf{\begin{tabular}[c]{@{}c@{}}Update\\ Method\end{tabular}}} & \multicolumn{3}{c|}{\textbf{No-Update}}                                 & \multicolumn{3}{c|}{\textbf{DEALRec Update}}                                       & \multicolumn{3}{c|}{\textbf{RL-Rec Update}}                                      & \multicolumn{3}{c|}{\textbf{UL-Rec Update}}                                       & \multicolumn{3}{c}{\textbf{Ours}}                                       \\ \cline{2-18} 
                                                                                     & \multicolumn{2}{c|}{\textbf{Metric}}                                                  & \multirow{2}{*}{HR@1} & \multirow{2}{*}{HR@5} & \multirow{2}{*}{N@5} & \multirow{2}{*}{HR@1} & \multirow{2}{*}{HR@5} & \multirow{2}{*}{N@5} & \multirow{2}{*}{HR@1} & \multirow{2}{*}{HR@5} & \multirow{2}{*}{N@5} & \multirow{2}{*}{HR@1} & \multirow{2}{*}{HR@5} & \multirow{2}{*}{N@5} & \multirow{2}{*}{HR@1} & \multirow{2}{*}{HR@5} & \multirow{2}{*}{N@5} \\ \cline{2-3}
                                                                                     & \multicolumn{2}{c|}{\textbf{Dataset Domain}}                                          &                       &                       &                         &                       &                       &                         &                       &                       &                         &                       &                       &                         &                       &                       &                         \\ \hline
\multirow{6}{*}{\textbf{Rec-SAVER}}                                                  & \multicolumn{2}{c|}{\textbf{Fashion}}                                                 &  12.72                     &39.60                       &26.92                         & 17.04                      &34.37                       &26.62                         &\underline{19.95}                       &\underline{44.69}                       &\underline{33.95}                         &  14.09                     &  29.52                     &23.00                         &\textbf{20.48}                       &\textbf{51.45}                       &\textbf{36.90}                         \\
                                                                                     & \multicolumn{2}{c|}{\textbf{Grocery}}                                                &10.03                       &\underline{27.92}                       &\underline{19.48}                         &10.04                       &23.34                       &17.49                         &9.77                       &24.30                       &18.04                         &\underline{10.13}                       &18.88                       &15.09                         &\textbf{10.50}                       &\textbf{31.72}                       &\textbf{21.84}                         \\
                                                                                     & \multicolumn{2}{c|}{\textbf{Health}}                                                  &4.51                       &\underline{18.29}                       &11.47                         & 5.10                      &15.26                       &10.64                         &\textbf{8.44}                       &17.53                       &\underline{13.60}                         &3.42                       &11.36                       &7.71                         &6.20                       &\textbf{21.81}                       &\textbf{14.30}                         \\
                                                                                     & \multicolumn{2}{c|}{\textbf{Clothing}}                                                 &8.09                       &26.82                       &17.70                         &7.58                       &23.40                       &16.32                         &    \textbf{9.07}                   &26.30                       &\underline{18.59}                         &\underline{8.98}                       &20.81                       &15.55                         &8.55                       &\textbf{32.76}                       &\textbf{21.24}                         \\
                                                                                     & \multicolumn{2}{c|}{\textbf{CDs}}                                                     & 10.67                      &35.22                       &23.18                         &12.11                       &32.22                       &23.11                         &\underline{19.71}                       &\underline{37.20}                       &\underline{29.43}                         &15.56                       &30.26                       &23.82                         &\textbf{20.59}                       &\textbf{42.46}                       &\textbf{32.12}                         \\
                                                                                     & \multicolumn{2}{c|}{\textbf{MovieLens}}                                               &2.23                       &\underline{11.88}                       &\underline{7.09}                         &\underline{3.65}                       &9.93                       &7.09                         &\textbf{3.73}                       &9.54                       &6.92                         &2.24                       &5.63                       &4.15                         &2.98                       &\textbf{14.63}                       &\textbf{8.88}                         \\ \hline
\multirow{6}{*}{\textbf{Rec-R1}}                                                     & \multicolumn{2}{c|}{\textbf{Fashion}}                                                 & 10.16                      &34.88                       &22.69                         &13.78                       &34.93                       &25.84                         &16.42                       &\underline{37.57}                       &\underline{28.41}                         &\underline{19.25}                       &30.24                       &25.57                         &\textbf{21.03}                       &\textbf{51.71}                       &\textbf{37.97}                         \\
                                                                                     & \multicolumn{2}{c|}{\textbf{Grocery}}                                                &10.23                       &\underline{25.93}                       &\underline{18.61}                         &\textbf{10.32}                       &21.45                       &16.61                         &8.88                       &21.18                       &15.85                         &8.79                       &17.89                       &14.02                         &\underline{9.47}                       & \textbf{27.86}                      &\textbf{19.29}                         \\
                                                                                     & \multicolumn{2}{c|}{\textbf{Clothing}}                                                 & \underline{10.46}                      &\underline{29.65}                       &\underline{19.90}                         &6.57                       &21.67                       &14.92                         &9.57                       &22.52                       &16.81                         &9.41                       &19.69                       &15.26                         &\textbf{13.33}                       &\textbf{31.00}                       &\textbf{22.62}                         \\
                                                                                     & \multicolumn{2}{c|}{\textbf{Health}}                                                  & 2.92                      &9.69                       &6.45                         &4.02                       &15.72                       &10.44                         &\underline{5.80}                       &\underline{16.22}                       &\underline{11.38}                         &\textbf{5.80}                       &11.89                       &9.28                         &4.94                       &\textbf{21.93}                       &\textbf{13.64}                         \\
                                                                                     & \multicolumn{2}{c|}{\textbf{CDs}}                                                     & 8.71                      &\underline{32.59}                       &21.03                         &16.23                       &29.88                       &23.96                         &\underline{18.66}                       &31.10                       &\underline{25.83}                         &18.12                       &29.25                       &24.62                         &\textbf{20.02}                       &\textbf{38.03}                       &\textbf{29.89}                         \\
                                                                                     & \multicolumn{2}{c|}{\textbf{MovieLens}}                                               &1.84                       &\underline{9.73}                       &\underline{5.75}                         &\underline{2.73}                       &8.01                       &5.66                         &2.72                       &8.02                       &5.68                         &1.77                       & 4.28                      &3.20                         &\textbf{3.63}                       &\textbf{11.17}                       &\textbf{7.50}                         \\ \hline
\multirow{6}{*}{\textbf{GDPO}}                                                       & \multicolumn{2}{c|}{\textbf{Fashion}}                                                 & 15.09                      &39.71                       &28.33                         &16.84                       &\underline{41.29}                       &29.96                         &\underline{23.61}                       &38.37                       &\underline{31.98}                         &19.33                       &32.84                       &27.16                         &\textbf{32.80}                       &\textbf{43.73}                       &\textbf{39.26}                         \\
                                                                                     & \multicolumn{2}{c|}{\textbf{Grocery}}                                                &12.01                       &\underline{21.92}                       &\underline{17.46}                         &10.65                       &19.52                       &15.71                         &\underline{12.34}                       &19.96                       &16.74                         &10.32                       &17.38                       &14.45                         &\textbf{12.51}                       &\textbf{25.39}                       &\textbf{19.54}                         \\
                                                                                     & \multicolumn{2}{c|}{\textbf{Clothing}}                                                 &11.40                       &\underline{22.63}                       &\underline{17.42}                         &8.80                       &19.79                       &14.99                         &10.86                       &20.49                       &16.33                         &\underline{11.80}                       &19.65                       &16.39                         &\textbf{12.54}                       &\textbf{26.28}                       &\textbf{20.15}                         \\
                                                                                     & \multicolumn{2}{c|}{\textbf{Health}}                                                  &\underline{7.57}                       &\underline{17.27}                       &\underline{12.97}                         &7.52                       &15.19                       &11.87                         &6.27                       &15.89                       &11.57                         &6.87                       &13.01                       &10.41                         &\textbf{8.30}                       &\textbf{18.22}                       &\textbf{13.67}                         \\
                                                                                     & \multicolumn{2}{c|}{\textbf{CDs}}                                                     &15.68                       &\underline{31.54}                       &24.27                         &18.41                       &29.13                       &24.50                         &\underline{21.00}                       &30.64                       &\underline{26.63}                         &20.70                       &29.14                       &25.60                         &\textbf{21.07}                       &\textbf{37.22}                       &\textbf{29.84}                         \\
                                                                                     & \multicolumn{2}{c|}{\textbf{MovieLens}}                                               &1.32                       &\underline{7.45}                       &4.56                         &2.42                       &6.54                       &4.73                         &\underline{2.74}                       &7.95                       &\underline{5.67}                         &1.64                       &3.75                       &2.83                         &\textbf{2.75}                       &\textbf{9.95}                       &\textbf{6.60}                         \\ \hline
\multirow{6}{*}{\textbf{$\text{G}^2$RPO}}                                                       & \multicolumn{2}{c|}{\textbf{Fashion}}                                                 &22.10                       &\underline{45.97}                       &34.70                         &21.23                       &37.01                       &30.11                         &\textbf{28.96}                       &42.50                       &\underline{36.88}                         & 15.50                      &30.76                       &24.24                         &25.46                       &\textbf{49.87}                       &\textbf{38.54}                         \\
                                                                                     & \multicolumn{2}{c|}{\textbf{Grocery}}                                                & 11.17                      &\underline{28.60}                       &\underline{20.43}                         &9.84                       &22.79                       &17.18                         &\underline{11.22}                       &21.18                       &16.98                         &10.23                       &19.01                       &15.30                         &\textbf{13.13}                       &\textbf{28.70}                       &\textbf{21.57}                         \\
                                                                                     & \multicolumn{2}{c|}{\textbf{Clothing}}                                                 & 10.46                      &\underline{29.65}                       &19.90                         &7.95                       &21.86                       &15.76                         &\textbf{15.32}                       &24.49                       &\underline{20.64}                         &9.85                       &20.50                       &16.00                         &\underline{11.23}                       &\textbf{33.50}                       &\textbf{22.59}                         \\
                                                                                     & \multicolumn{2}{c|}{\textbf{Health}}                                                  &7.18                       &\underline{19.74}                       &\underline{13.89}                         &5.85                       &16.33                       &11.79                         &\underline{9.90}                       &16.18                       &13.55                         &6.99                       &13.28                       &10.57                         &\textbf{10.43}                       &\textbf{20.39}                       &\textbf{15.84}                         \\
                                                                                     & \multicolumn{2}{c|}{\textbf{CDs}}                                                     & 17.58                      &\underline{36.54}                       &27.79                         &16.74                       &31.73                       &25.26                         &\underline{23.94}                       &36.21                       &\underline{30.99}                         &17.96                       &30.92                       & 25.41                        &\textbf{24.85}                       &\textbf{39.58}                       &\textbf{32.89}                         \\
                                                                                     & \multicolumn{2}{c|}{\textbf{MovieLens}}                                               & 3.29                      &\underline{11.72}                       &\underline{7.64}                         &\underline{3.32}                       &7.97                       &5.87                         &2.67                       &7.42                       &5.29                         & 2.85                      &7.95                       &5.65                         &\textbf{4.30}                       &\textbf{11.97}                       &\textbf{8.36}                         \\ \bottomrule
\end{tabular}
\end{table}

\vspace{-0.1cm}
\subsection{Performance Evaluation (RQ1--2)}
\vspace{-0.1cm}
\paragraph{(1) Recommendation Accuracy (RQ1).}

We first evaluate whether our update method improves recommendation accuracy under continual updates from policy-shaped contextual bandit feedback. 
To this end, we report standard top-K recommendation metrics, including HR@1, HR@5, and NDCG@5. 
Overall, our update method achieves the best or highly competitive accuracy on the vast majority of domain--backbone pairs. 
In particular, it consistently outperforms \emph{no update} as well as naive feedback-driven baselines such as DEALRec and UL-Rec updates, indicating that our method can exploit contextual bandit feedback more effectively for improving future recommendation quality. 
At the same time, \emph{no update} still outperforms several alternative update strategies in many settings, underscoring that update rules which fail to properly account for the structure of contextual-bandit logs can degrade recommendation quality despite using additional feedback.

\vspace{-0.3cm}
\paragraph{(2) Popularity Bias (RQ2).}
We next evaluate whether our update method mitigates prior-policy reinforcement bias while maintaining strong recommendation quality.
In recommender systems, optimizing only for accuracy on policy-shaped feedback can make the model repeatedly imitate items favored by the prior policy, resulting in repetitive recommendations and reduced exposure to alternative or long-tail items.
To quantify this effect, we measure popularity-based diversity.

\begin{wraptable}[14]{r}{8.5cm}
\tiny
\setlength{\tabcolsep}{2.2pt}
\vspace{-2.3em}
\caption{
Comparison of recommendation diversity across baselines.
Each cell reports Div@1 / Div@5.
}
\centering
\resizebox{8.5cm}{!}{
\renewcommand{\arraystretch}{0.7}
\begin{tabular}{@{}llcccc@{}}
\toprule
\textbf{Domain} & \textbf{Backbone} 
& \textbf{DEALRec} 
& \textbf{RLRec} 
& \textbf{UL-Rec} 
& \textbf{Ours} \\
\midrule

\multirow{2}{*}{\textbf{Fashion}} 
& GRPO 
& 23.73 / 27.12 
& 25.53 / \underline{28.51} 
& \underline{27.88} / 26.95 
& \textbf{31.07} / \textbf{32.44} \\
& GDPO 
& 26.61 / 27.43 
& \underline{30.72} / \underline{30.25} 
& 28.50 / 28.71 
& \textbf{37.07} / \textbf{35.22} \\

\midrule
\multirow{2}{*}{\textbf{Grocery}} 
& GRPO 
& \underline{27.27} / 27.57 
& 26.71 / \underline{27.71} 
& 26.31 / 26.83 
& \textbf{28.94} / \textbf{28.52} \\
& GDPO 
& 27.01 / 27.57 
& \underline{29.38} / \underline{27.97} 
& 28.78 / 27.07 
& \textbf{30.60} / \textbf{28.70} \\

\midrule
\multirow{2}{*}{\textbf{Clothing}} 
& GRPO 
& 28.06 / 29.56 
& \underline{29.77} / \underline{29.89} 
& 29.60 / 29.37 
& \textbf{31.53} / \textbf{30.79} \\
& GDPO 
& 29.63 / 29.89 
& \underline{30.53} / \underline{30.27} 
& 30.02 / 30.03 
& \textbf{31.31} / \textbf{31.33} \\

\midrule
\multirow{2}{*}{\textbf{Health}} 
& GRPO 
& \underline{10.03} / 10.55 
& \textbf{10.70} / \underline{10.73} 
& 9.82 / 9.98 
& 9.55 / \textbf{11.17} \\
& GDPO 
& \underline{11.34} / \underline{11.36} 
& 11.06 / 10.97 
& 10.56 / 10.81 
& \textbf{12.19} / \textbf{12.08} \\

\midrule
\multirow{2}{*}{\textbf{CDs}} 
& GRPO 
& 20.23 / 22.34 
& \underline{21.95} / \underline{23.38} 
& 21.56 / 23.24 
& \textbf{24.42} / \textbf{25.24} \\
& GDPO 
& \textbf{24.85} / 23.11 
& 23.60 / 24.17 
& \underline{24.07} / \underline{24.37} 
& 22.32 / \textbf{26.00} \\

\midrule
\multirow{2}{*}{\textbf{MovieLens}} 
& GRPO 
& \underline{3.66} / 3.64 
& \textbf{3.89} / \underline{3.67} 
& 3.35 / 3.40 
& 3.45 / \textbf{3.75} \\
& GDPO 
& 3.61 / 3.56 
& 3.43 / \textbf{3.65} 
& \underline{3.91} / 3.38 
& \textbf{3.94} / \underline{3.60} \\

\bottomrule
\end{tabular}
}
\label{tab:diversity}
\end{wraptable}

Let \(p(x)\) denote the normalized popularity of item \(x\) estimated from the training data, and let \(R_u@K\) be the top-\(K\) recommendation list for user \(u\). 
We define Diversity@\(K\) as
$
\mathrm{Diversity@}K(u)
=
\frac{1}{K}
\sum_{x \in R_u@K}
-\log p(x).$
For comparability, we min--max normalize this score to \([0,1]\) using dataset-level item popularity, multiply it by \(100\) to obtain a \(0\)--\(100\) scale, and report the average over all users.
A value close to 100 indicates stronger long-tail emphasis, whereas a value close to 0 indicates higher popularity bias. 
Our results show that our update method yields higher diversity than competing LLM-based update strategies in most cases, indicating that its accuracy gains are not obtained by simply over-reinforcing a narrow set of already popular items.
Overall, our method achieves a better balance between recommendation accuracy and popularity-aware diversity under continual updates from policy-shaped deployment feedback.

\vspace{-0.3cm}
\subsection{Ablation Study (RQ3--4)}
\vspace{-0.1cm}

\begin{wrapfigure}[8]{r}{0.57\textwidth}  
\vspace{-0.7em} 
  \centering
  \includegraphics[width=0.57\textwidth]{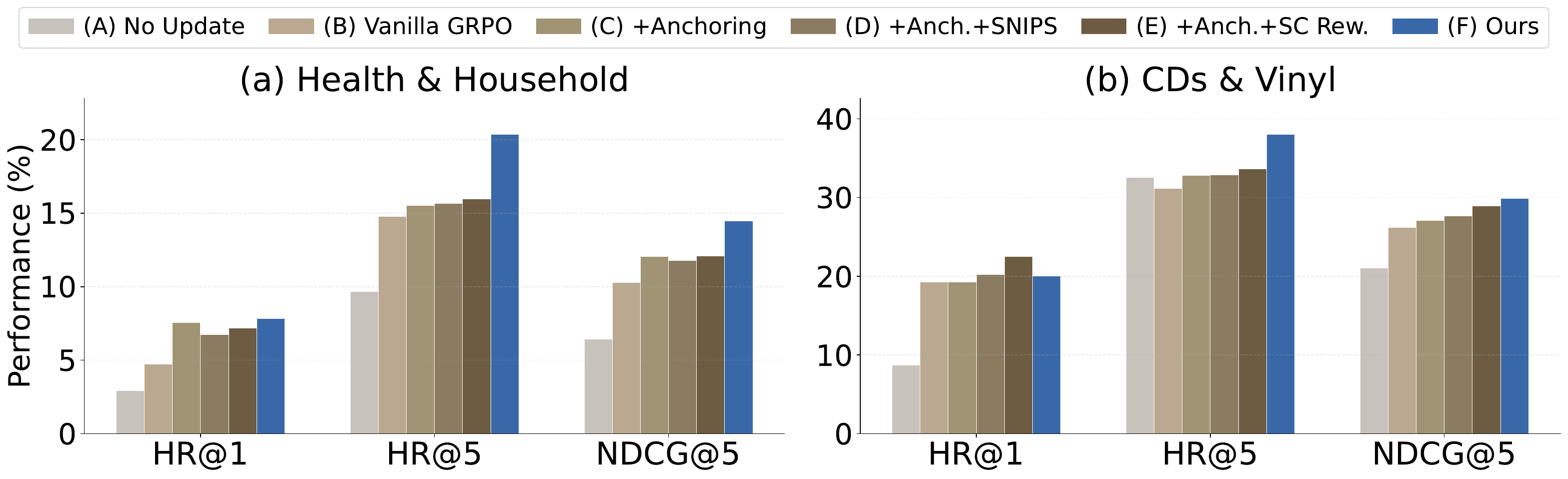}
\caption{
Comparison of in-domain recommendation performance across model variants.
}
  \label{fig:ablation}
\end{wrapfigure}

As shown in Fig.~\ref{fig:ablation}, we conduct a component-wise ablation study, using \textsc{Rec-R1} as the backbone. 
We compare six variants:
\setlength{\fboxsep}{0pt}\colorbox[HTML]{DAE8FC}{(A) \textbf{No Update}};
\setlength{\fboxsep}{0pt}\colorbox[HTML]{DAE8FC}{(B) \textbf{Vanilla GRPO}} (uniformly treating positive- and no-response feedback with fixed rewards);
\setlength{\fboxsep}{0pt}\colorbox[HTML]{DAE8FC}{(C) \textbf{+Anchoring Only}} (inserting the logged action as an anchor into each rollout group);
\setlength{\fboxsep}{0pt}\colorbox[HTML]{DAE8FC}{(D) \textbf{+Anchoring+SNIPS}} (adding importance correction for positive-response samples);
\setlength{\fboxsep}{0pt}\colorbox[HTML]{DAE8FC}{(E) \textbf{+Anchoring+SC Reward}} (adding self-certainty reward for no-response samples);
\setlength{\fboxsep}{0pt}\colorbox[HTML]{DAE8FC}{(F) \textbf{Ours}} (our full method: anchoring + SNIPS + SC reward).

\noindent\textbf{(1) Naively applying GRPO to contextual bandit feedback is not sufficient.}
Compared with \textbf{No Update} (A), \textbf{Vanilla GRPO} (B) often improves performance, but the gains are inconsistent and several domains show little benefit or even degradation. 
This indicates that simply applying GRPO to policy-shaped contextual bandit feedback does not reliably lead to better generalization.

\noindent\textbf{(2) Logged-action anchoring is the key stabilizing design.}
Comparing \textbf{Vanilla GRPO} (B) and \textbf{+Anchoring Only} (C), we observe that anchoring improves recommendation performance on most domains, showing that the logged action serves as a useful reference point rather than a label to imitate directly. 
This supports our answer to \textbf{RQ3}: inserting the logged action into each rollout group makes GRPO updates more effective under contextual bandit feedback.

\noindent\textbf{(3) Response feedback benefits from off-policy correction.}
Comparing \textbf{+Anchoring Only} (C) and \textbf{+Anchoring+SNIPS} (D), SNIPS yields additional gains on most domains.
This suggests that response feedback is informative but still off-policy, making importance-based correction beneficial.

\noindent\textbf{(4) No-response should not be treated as a hard negative.}
Comparing \textbf{+Anchoring Only} (C) and \textbf{+Anchoring+SC Reward} (E), we find that self-certainty reward improves performance in many cases. 
This is because no-response is inherently ambiguous: it does not necessarily mean that the exposed item was truly bad. 
Rather than giving a fixed negative penalty, self-certainty introduces a softer confidence-aware signal, which reduces overly aggressive suppression and improves generalization.

\noindent\textbf{(5) The full design is complementary and most effective.}
Finally, \textbf{Ours} (F) consistently outperforms the partial variants (B--E), showing that anchoring, SNIPS, and self-certainty reward play complementary roles. 
Together, these results confirm that effective continual updating under contextual bandit feedback requires both a stable anchor-based relative learning signal and asymmetric handling of positive- and no-response feedback.

\begin{wraptable}[6]{r}{8.2cm}
\scriptsize
\setlength{\tabcolsep}{1.2pt}
\vspace{-2.8em}
\caption{
Monthly Online A/B Test Results
}
\centering
\resizebox{8.2cm}{!}{
\setlength{\tabcolsep}{1.2pt}
\renewcommand{\arraystretch}{1.08}
\begin{tabular}{c|ccc|ccc|ccc}
\toprule
\multirow{2}{*}{\textbf{Model}} 
& \multicolumn{3}{c|}{\textbf{Feb.}} 
& \multicolumn{3}{c|}{\textbf{Mar.}} 
& \multicolumn{3}{c}{\textbf{Apr.}} \\
\cmidrule(lr){2-4} \cmidrule(lr){5-7} \cmidrule(lr){8-10}
& \textbf{Imp.} & \textbf{Clk.} & \textbf{CTR}
& \textbf{Imp.} & \textbf{Clk.} & \textbf{CTR}
& \textbf{Imp.} & \textbf{Clk.} & \textbf{CTR} \\
\midrule
\textbf{Popolarity} 
& 106.58K & 3.26K & 3.06\%
& 100.33K & 3.48K & 3.47\%
& 95.05K & 3.13K & 3.30\% \\

\textbf{ML model} 
& 136.08K & 9.70K & 7.13\%
& 128.92K & 9.12K & 7.07\%
& 122.81K & 8.71K & 7.09\% \\

\textbf{GRPO-Update} 
& 184.56K & 6.69K & 3.63\%
& 180.17K & 8.16K & 4.53\%
& 170.74K & 7.18K & 4.20\% \\

\cellcolor[HTML]{EFEFEF}\textbf{Ours} 
& \cellcolor[HTML]{EFEFEF}323.09K 
& \cellcolor[HTML]{EFEFEF}23.36K 
& \cellcolor[HTML]{EFEFEF}\textbf{7.23\%}
& \cellcolor[HTML]{EFEFEF}306.67K 
& \cellcolor[HTML]{EFEFEF}22.94K 
& \cellcolor[HTML]{EFEFEF}\textbf{7.48\%}
& \cellcolor[HTML]{EFEFEF}293.69K 
& \cellcolor[HTML]{EFEFEF}22.37K 
& \cellcolor[HTML]{EFEFEF}\textbf{7.62\%} \\
\bottomrule
\end{tabular}
}
\label{tab:monthly_online_ab_main}
\end{wraptable}

\vspace{-0.3cm}
\subsection{Online A/B Test (RQ5)} \label{subsection: online}
\vspace{-0.2cm}

We validated our method via an online A/B test on a production recommender system
(Feb 2026--Apr 2026), comparing it against a rolling-window popularity model,
a periodically retrained ML recommender, and a GRPO-based continual update baseline.
Both the GRPO-Update baseline and our method use Rec-R1 as the backbone.
Although traffic shares differed, this imbalance reflects a production-oriented
allocation policy: verified LLM-Rec variants with stronger offline and preliminary
online performance received larger traffic shares to improve overall service
quality. User assignment and metric aggregation were conducted at the user level,
reducing the risk of inflated significance from repeated impressions.
As shown in Table~\ref{tab:monthly_online_ab_main}, our method improves across all
three months, increasing from 7.23\% CTR in February to 7.48\% in March and
7.62\% in April.
It outperforms ML-Retrain in every month: 7.23\% vs. 7.13\% in February,
7.48\% vs. 7.07\% in March, and 7.62\% vs. 7.09\% in April.
By contrast, ML-Retrain remains nearly flat, GRPO-Update improves in March but
drops in April, and the popularity-based model fluctuates over time.
Further details are provided in Appendix~\ref{sec:ab_test_detail}.

\vspace{-0.3cm}
\vspace{-0.1cm}
\section{Conclusion}
\vspace{-0.3cm}
We introduced \textbf{Anchored Bandit Policy Optimization (ABPO)} for continual LLM-Rec updates from policy-shaped deployment feedback.
ABPO anchors updates on logged exposures, corrects off-policy bias with SNIPS, and tempers ambiguous no-responses with self-certainty, leading to a more consistent upward CTR trend in production than other update strategies.

\clearpage
\bibliographystyle{unsrtnat}
\bibliography{neurips_2026}
\clearpage
\appendix
\section{Dataset} \label{app:dataset_overall}
We conduct experiments on five \textbf{Amazon Review 2023}\footnote{{\color[HTML]{0037D7}\url{https://amazon-reviews-2023.github.io}}}~\citep{hou2024bridging} domains and the \textbf{MovieLens} dataset\footnote{{\color[HTML]{0037D7}\url{https://grouplens.org/datasets/movielens}}}~\citep{harper2015movielens}, which together cover heterogeneous recommendation scenarios ranging from e-commerce purchases to movie consumption, as summarized in Table~\ref{tab:data_stat}. Specifically, the Amazon Review dataset includes five practically relevant domains: (1) \textit{Amazon \textbf{Fashion}}, (2) \textit{\textbf{Clothing} Shoes and Jewelry}, (3) \textit{\textbf{Grocery} and Gourmet Food}, (4) \textit{\textbf{Health} and Household}, and (5) \textit{\textbf{CDs} and Vinyl}. In addition, MovieLens 32M (\textbf{\textit{ml-32m}}) serves as a standard large-scale benchmark for movie recommendation.

\begin{wraptable}[8]{r}{7.4cm}
\small 
\setlength{\tabcolsep}{1.8pt}
\vspace{-1.6em}
\caption{
Dataset Statistics
}

\centering
    \resizebox{7.4cm}{!}{
    \setlength{\tabcolsep}{1.3pt}
    \renewcommand{\arraystretch}{1.1}
\begin{tabular}{c|c|ccccc}
\toprule
\textbf{Dataset} &
  \textbf{Domain} &
  \textbf{\#Users} &
  \textbf{\#Items} &
  \textbf{\begin{tabular}[c]{@{}c@{}}Avg \\ Length\end{tabular}} &
  \textbf{\begin{tabular}[c]{@{}c@{}}Avg Item \\ Purchase\end{tabular}} &
  \textbf{Sparsity} \\ \hline

\multirow{5}{*}{\textbf{Amazon Review}}
& \textbf{Fashion}   & 11,028 & 59,004  & 6.54  & 1.23  & 0.9994 \\
& \textbf{Clothing}  & 13,766 & 107,353 & 11.83 & 1.65  & 0.9994 \\
& \textbf{Grocery}   & 11,334 & 6,149   & 10.76 & 21.02 & 0.9926 \\
& \textbf{Health}    & 21,152 & 74,784  & 8.49  & 2.50  & 0.9994 \\ 
& \textbf{CDs}       & 17,035 & 83,774  & 13.71 & 1.97  & 0.9993 \\ \hline

\multirow{1}{*}{\textbf{MovieLens}}
& \textbf{ml-32m}    & 10,623 & 12,259  & 48.96 & 42.43 & 0.9960 \\ 

\bottomrule
\end{tabular}
}
\label{tab:data_stat}
\end{wraptable}

For each dataset, we assume that every user is associated with a chronological
interaction sequence over timesteps $1:T+N$. Based on this sequence, we
construct three types of data: (1) data for training the initial policy model,
(2) offline contextual bandit logs generated by the learned logging policy,
and (3) a held-out test set. The initial policy is first trained in a standard
next-item prediction manner, using the interaction history up to timestep $T$
to predict the item at timestep $T+1$. Specifically, multiple backbone models,
including Rec-SAVER and Rec-R1, are separately trained on this data and
subsequently used as the initially deployed logging policies
$\pi_{\theta_{\mathrm{init}}}$ for continual updating.

\paragraph{Offline construction of synthetic bandit feedback.}
Definition~2 describes post-deployment contextual bandit feedback in an online
system. Since standard sequential recommendation datasets do not contain real
randomized exposure-response logs, we construct synthetic one-step bandit
feedback from next-item sequences.

For each user sequence, the prefix \(\mathbf{x}_{1:T}\) is treated as the
initial history, and \(x_{T+N}\) is reserved as the held-out evaluation target.
We construct update logs only for
\[
t'\in\{T,\ldots,T+N-2\},
\]
so that the next item used to construct synthetic feedback,
$
x_{t'+1},
$
always belongs to \(\{x_{T+1},\ldots,x_{T+N-1}\}\) and never equals the held-out
evaluation target \(x_{T+N}\).

For each update context \(\mathbf{x}_{1:t'}\), we construct a controlled \textit{M}-item candidate set \(\mathcal C_{t'}\). 
Specifically, we include the next
observed item \(x_{t'+1}\) as the unique latent positive item and uniformly
sample \textit{M}-1 distractor items from the user's non-interacted items:
\[
\mathcal C_{t'}
=
\{x_{t'+1}\}\cup \mathcal N_{t'},
\qquad
|\mathcal N_{t'}|=M-1,
\qquad
|\mathcal C_{t'}|=M.
\]
The distractor set \(\mathcal N_{t'}\) is sampled from items that do not appear
in the user's observed interaction sequence and excludes the held-out
evaluation target \(x_{T+N}\). 
This controlled candidate construction ensures that a positive feedback event is possible for every update context while the held-out evaluation target is never included in any training-time candidate set.
Unless otherwise specified, we use a candidate set size of \textit{M}=200 items.

Given \(\mathcal C_{t'}\), the logging policy scores every candidate item
\(a\in\mathcal C_{t'}\) using the length-normalized LLM likelihood
\[
s_0(a,\mathbf{x}_{1:t'})
=
\frac{1}{|r(a)|}
\sum_j
\log \pi_{\theta_{\mathrm{init}}}
\left(
r_j(a)
\mid
\mathrm{Prompt}(\mathbf{x}_{1:t'},\mathcal C_{t'}), r_{<j}(a)
\right),
\]
where \(r(a)\) denotes the textual representation of item \(a\). These scores
define a candidate-level logging distribution
\[
e_0(a\mid \mathbf{x}_{1:t'})
=
\frac{
\exp(s_0(a,\mathbf{x}_{1:t'})/\tau)
}{
\sum_{b\in\mathcal C_{t'}}
\exp(s_0(b,\mathbf{x}_{1:t'})/\tau)
}.
\]
The logged recommendation is then sampled as
\[
a_{t'}^{\log}
\sim
\mathrm{Categorical}
\left(
e_0(\cdot\mid \mathbf{x}_{1:t'})
\right),
\]
and we store its logging propensity
$
e_0(a_{t'}^{\log}\mid \mathbf{x}_{1:t'}).
$
This propensity is exact within our offline construction because all 200 items
in \(\mathcal C_{t'}\) are scored and the logged item is sampled from the
resulting controlled candidate-level softmax distribution. It is therefore a
candidate-conditional exposure probability, not a raw token likelihood and not
a probability normalized over the full item catalog.

Because real user responses to the sampled exposure are unavailable, we assign
a synthetic bandit feedback label by checking whether the sampled exposed item
matches the next observed item:
\[
y_{t'}
=
\mathbf{1}
\left[
a_{t'}^{\log}=x_{t'+1}
\right].
\]
Thus, \(y_{t'}=1\) means that the logging policy exposed the latent positive
item in the candidate set, while \(y_{t'}=0\) means that it exposed one of the
distractor items. We use \(y_{t'}=0\) as a no-positive-response proxy rather
than an explicit negative preference, since the offline sequence does not
provide direct evidence that the user would dislike the exposed distractor.

The resulting synthetic contextual bandit dataset is
\[
\mathcal D_{\mathrm{bandit}}
=
\left\{
\left(
\mathbf{x}_{1:t'},
a_{t'}^{\log},
y_{t'},
e_0(a_{t'}^{\log}\mid \mathbf{x}_{1:t'})
\right)
\right\}_{t'=T}^{T+N-2}.
\]

Finally, each updated model is evaluated on the held-out next-item prediction
task: given the evaluation context
\[
\mathbf{x}_{1:T+N-1},
\]
the model predicts the held-out item \(x_{T+N}\). Since update feedback labels
are constructed only from next items up to \(x_{T+N-1}\), and
\(x_{T+N}\) is excluded from all training-time candidate sets, the held-out
target is not used for logging, feedback construction, or policy updating.

\section{Reproducibility and Implementation Details} \label{sec:implementation_details}

All experiments were conducted on a cluster of eight NVIDIA H200 GPUs.
To ensure reproducibility, we make our implementation code publicly available through anonymous links.

\noindent$\blacksquare$ \textbf{Code Link}: {\footnotesize\color[HTML]{0037D7}\url{https://anonymous.4open.science/r/ABPO-5CD8}}

\begin{wraptable}[15]{r}{6.0cm}
\tiny
\setlength{\tabcolsep}{1.2pt}
\vspace{-3.5em}
\caption{
Training hyperparameters
}
\centering
\resizebox{6.0cm}{!}{
\renewcommand{\arraystretch}{0.7}
\begin{tabular}{@{}lccc@{}}
\toprule
\multirow{2}{*}{\textbf{Hyperparameter}} 
& \multicolumn{2}{c}{\textbf{Backbone}} 
& \multirow{2}{*}{\textbf{Update}} \\
\cmidrule(lr){2-3}
& \textbf{SFT} & \textbf{RL} & \\
\midrule

Batch size & 8 & 4 & 4 \\
Epochs & 2 & 2 & 1 \\
Max seq. length & 10k & 10k & 12k \\

\midrule
Optimizer 
& \multicolumn{3}{c}{Paged AdamW 8-bit} \\
Learning rate 
& \multicolumn{3}{c}{$5.0 \times 10^{-5}$} \\
Warm-up ratio 
& \multicolumn{3}{c}{0.05} \\
Grad. accum. steps 
& \multicolumn{3}{c}{8} \\
Weight decay 
& \multicolumn{3}{c}{0.01} \\
Grad. clipping 
& \multicolumn{3}{c}{1.0} \\

\midrule
LoRA $r$ / $\alpha$ / dropout 
& \multicolumn{3}{c}{8 / 16 / 0} \\
LoRA modules 
& \multicolumn{3}{c}{$v,q,o$, gate, up, down proj.} \\

\midrule
Precision 
& \multicolumn{3}{c}{bf16} \\
Grad. checkpointing 
& \multicolumn{3}{c}{True} \\
Parallelism 
& \multicolumn{3}{c}{DeepSpeed ZeRO-2} \\
Hardware 
& \multicolumn{3}{c}{8 $\times$ NVIDIA H200 GPUs} \\

\bottomrule
\end{tabular}
}
\label{tab:hyperparams}
\end{wraptable}

\paragraph{Hyperparameters.}
Table~\ref{tab:hyperparams} summarizes the hyperparameter configurations used in our experiments, including the learning rate, batch size, optimizer, warm-up ratio, gradient clipping threshold, weight decay, and training epochs.
We use the same LLM backbone (i.e., \textsc{gemma-3-4b-it}) for all methods to ensure a fair comparison.

Training proceeds in two stages.
First, we obtain the initial deployed policy $\pi_{\theta_{\mathrm{init}}}$ through supervised fine-tuning (SFT) on the supervised recommendation dataset $\mathcal{D}_{\mathrm{sup}}$.
This SFT checkpoint is shared by all RL-based update methods, including GRPO-based baselines and our proposed ABPO.
Second, each method performs continual policy updates using the post-deployment contextual-bandit dataset $\mathcal{D}_{\mathrm{bandit}}$.

For all RL-based update methods, we use the same rollout group size (16), decoding configuration, optimizer, and parameter-efficient fine-tuning setup unless otherwise specified.
Our ABPO uses the same GRPO-style clipped policy objective as the baselines, but differs in how the rollout group is constructed and normalized: the logged exposure is inserted as an anchor, SNIPS is applied to correct the anchor contribution under policy mismatch, and reward components are normalized over the anchor-augmented group.
Parameter-efficient fine-tuning is performed via LoRA~\citep{hu2022lora}, and all implementation details required for reproducibility are provided in Table~\ref{tab:hyperparams}.

\paragraph{Evaluation Protocol}
We adopt a standard leave-one-out evaluation strategy, where the final interaction in each user sequence is reserved for testing, the penultimate interaction for validation, and all earlier interactions for training.
During evaluation, each inference prompt explicitly includes a candidate set of 200 items, consisting of the ground-truth target item and unseen negative candidates.
To better reflect realistic candidate competition, we complement randomly sampled negatives with popularity-based unseen hard negatives.
For all baselines and our model, the user context and candidate set in each prompt are kept identical.
DPO-based baselines require explicit construction of negative samples and were therefore trained with additional negatives.

\section{Baselines}
\label{sec:llm_baseline_details}
\vspace{-0.15cm}

\paragraph{LLM-Rec Update Model.}
We survey and organize prior LLM-Rec update methods relevant to our study, and describe how they are implemented in our experiments.

\noindent$\blacksquare$ \textbf{DEALRec}~\citep{lin2024data} reduces the cost of updating LLM-based recommenders by pruning recommendation data before few-shot fine-tuning. 
It selects influential user sequences using an influence score estimated from a lightweight surrogate recommender and an effort score based on LLM gradients, while applying coverage-enhanced sampling to preserve data diversity. 
We implemented this baseline from scratch.

\noindent$\blacksquare$ \textbf{RLRec}~\citep{yin2025clicks} aligns an LLM-based generative recommender with post-deployment user feedback by converting noisy click logs into preference signals. 
It builds an initial policy via supervised fine-tuning on clicked suggestions, models preference uncertainty with a Gaussian Reward Model (GaRM), and further updates the policy using GRPO with a composite reward to improve user engagement while controlling reward hacking. 
We implemented this baseline from scratch.

\noindent$\blacksquare$ \textbf{ULRec}~\citep{zhang2025leveraging} updates LLM-based recommenders using unpaired post-interaction feedback to improve long-term user satisfaction. 
It decomposes successful trajectories into step-level positive supervision for SFT, and further applies KTO on negative feedback signals such as user disengagement to discourage recommendations that reinforce echo chambers. 
We implemented this baseline from scratch.

\paragraph{LLM-Rec Backbone.}
We summarize the LLM-based baselines used in our experiments and describe their implementations below.

\noindent$\blacksquare$ \textbf{Rec-SAVER}~\citep{tsai2024leveraging} enhances LLM-based recommendation by fine-tuning models with target-item labels and corresponding rationales.
The approach shows that rationale-supervised instruction tuning is effective for both zero-shot transfer and task-specific fine-tuning, including non-overlapping cross-domain settings.
We evaluate this baseline with our own in-house implementation.

\noindent$\blacksquare$ \textbf{Rec-R1}~\citep{lin2025rec} applies GRPO to recommendation, where the reward is based on whether the generated item matches the ground-truth next item. 
We built this baseline using a public GRPO implementation\footnote{{\color[HTML]{0037D7}\url{https://github.com/huggingface/open-r1}}}.

\noindent$\blacksquare$ \textbf{GDPO}~\citep{liu2026gdpo} is a reward-decoupled extension of GRPO for multi-objective RL training. 
Rather than aggregating multiple rewards before normalization, GDPO normalizes each reward component independently and then combines the resulting signals for policy optimization. 
This helps preserve reward-specific variations that may otherwise be obscured by scalar reward aggregation. 
Prior evaluations on tool-use, mathematical reasoning, and coding benchmarks show that GDPO can provide more stable training and stronger overall performance than GRPO. 
Although it was not developed for recommendation, its ability to handle heterogeneous rewards makes it a suitable baseline for LLM-based recommender training. 
We implement GDPO using a public GRPO codebase and adapt it to our recommendation setting.

\noindent$\blacksquare$ \textbf{$\text{G}^2$RPO}~\footnote{{\color[HTML]{0037D7}\url{https://anonymous.4open.science/r/G2RPO-FDB9}}} extends GDPO with accuracy-aware dynamic reward aggregation for LLM-based recommendation.
Instead of uniformly summing normalized reward signals, it computes aggregation weights by solving a simplex-constrained linear program that aligns the final update direction with the accuracy gradient while preventing auxiliary rewards from inducing harmful updates.
This design enables multi-reward optimization over accuracy, diversity, and groundedness without treating all objectives as equally tradeable.

\section{Detailed Online A/B Test Analysis}
\label{sec:ab_test_detail}
We validated our method via an online A/B test on a production recommender system (February 2026--April 2026), comparing it against a rolling-window popularity model, a periodically retrained conventional ML recommender, and a GRPO-based continual update model, with users assigned through stable hash-based randomization. 
Although traffic shares were not uniform, larger shares were allocated to LLM-Rec variants after their effectiveness had been verified, to improve overall production performance, while the randomized user-level assignment kept user populations comparable.
Both the GRPO-based baseline and our method use Rec-R1 as the backbone.
To examine whether each model benefits from successive production updates, we further provide a month-level breakdown from February to April 2026, where each update was customized to reflect the business logic of the production service.
Table~\ref{tab:monthly_online_ab} reports impressions, clicks, CTRs, and pairwise statistical comparisons between our method and each baseline. Statistical significance is computed using a two-sided two-proportion z-test at the impression level.

\begin{table}[t]
\centering
\small
\caption{
Monthly online A/B test results from February to April.
$\Delta$ denotes the CTR difference between our method and each baseline, i.e., Ours $-$ Baseline, in percentage points.
Statistical significance is computed using a two-sided two-proportion z-test.
}
\label{tab:monthly_online_ab}
\resizebox{\linewidth}{!}{
\begin{tabular}{llrrrrrr}
\toprule
Month & Model & Impressions & Clicks & CTR (\%) & $\Delta$ & $z$ & $p$ \\
\midrule
\multirow{4}{*}{February}
& Popularity  & 106,583 & 3,260  & 3.06 & +4.17 & 48.98 & $<0.001$ \\
& ML-Retrain  & 136,084 & 9,702  & 7.13 & +0.10 & 1.20  & 0.232 \\
& GRPO-Update & 184,557 & 6,691  & 3.63 & +3.60 & 52.34 & $<0.001$ \\
& Ours        & 323,085 & 23,357 & 7.23 & --    & --    & -- \\
\midrule
\multirow{4}{*}{March}
& Popularity  & 100,333 & 3,481  & 3.47 & +4.01 & 44.75 & $<0.001$ \\
& ML-Retrain  & 128,922 & 9,121  & 7.07 & +0.40 & 4.67  & $<0.001$ \\
& GRPO-Update & 180,169 & 8,160  & 4.53 & +2.95 & 40.65 & $<0.001$ \\
& Ours        & 306,672 & 22,937 & 7.48 & --    & --    & -- \\
\midrule
\multirow{4}{*}{April}
& Popularity  & 95,047  & 3,133  & 3.30 & +4.32 & 46.75 & $<0.001$ \\
& ML-Retrain  & 122,813 & 8,708  & 7.09 & +0.53 & 5.88  & $<0.001$ \\
& GRPO-Update & 170,743 & 7,179  & 4.20 & +3.41 & 45.92 & $<0.001$ \\
& Ours        & 293,686 & 22,366 & 7.62 & --    & --    & -- \\
\bottomrule
\end{tabular}
}
\end{table}

To further analyze whether the observed improvements reflect a consistent trend rather than a single-month fluctuation, Table~\ref{tab:monthly_trend_test} reports month-to-month CTR changes for each model. In particular, we compare February to March, March to April, and February to April using the same two-sided two-proportion z-test.

\begin{table}[t]
\centering
\small
\caption{
Month-to-month CTR changes in the online A/B test.
Statistical significance is computed using a two-sided two-proportion z-test between the CTRs of the two corresponding months.
}
\label{tab:monthly_trend_test}
\resizebox{\linewidth}{!}{
\begin{tabular}{lrrrrrrrrr}
\toprule
\multirow{2}{*}{\textbf{Model}}
& \multicolumn{3}{c}{\textbf{Feb.$\rightarrow$Mar.}}
& \multicolumn{3}{c}{\textbf{Mar.$\rightarrow$Apr.}}
& \multicolumn{3}{c}{\textbf{Feb.$\rightarrow$Apr.}} \\
\cmidrule(lr){2-4} \cmidrule(lr){5-7} \cmidrule(lr){8-10}
& $\Delta$ CTR & $z$ & $p$
& $\Delta$ CTR & $z$ & $p$
& $\Delta$ CTR & $z$ & $p$ \\
\midrule
\textbf{Popularity}
& +0.41\%p & 5.26 & $<0.001$
& -0.17\%p & -2.12 & 0.034
& +0.24\%p & 3.04 & 0.002 \\

\textbf{ML-Retrain}
& -0.05\%p & -0.55 & 0.584
& +0.02\%p & 0.15 & 0.879
& -0.04\%p & -0.39 & 0.700 \\

\textbf{GRPO-Update}
& +0.90\%p & 13.81 & $<0.001$
& -0.32\%p & -4.70 & $<0.001$
& +0.58\%p & 8.90 & $<0.001$ \\

\textbf{Ours}
& +0.25\%p & 3.80 & $<0.001$
& +0.14\%p & 2.00 & 0.046
& +0.39\%p & 5.78 & $<0.001$ \\
\bottomrule
\end{tabular}
}
\end{table}

Across the three months, our method is the only model that shows a monotonic improvement in CTR: 7.23\% in February, 7.48\% in March, and 7.62\% in April. As shown in Table~\ref{tab:monthly_trend_test}, the improvement is statistically significant in both consecutive intervals: +0.25\%p from February to March ($z=3.80$, $p<0.001$) and +0.14\%p from March to April ($z=2.00$, $p=0.046$). Overall, this yields a statistically significant +0.39\%p gain from February to April ($z=5.78$, $p<0.001$). These results suggest that the improvement is not driven by a single-month fluctuation, but follows a consistent upward trend across successive production update rounds.

The comparison with ML-Retrain provides a conservative reference point because ML-Retrain is the strongest non-LLM continual-update baseline in terms of absolute CTR. In February, our method is numerically higher than ML-Retrain, but the difference is not statistically significant (+0.10\%p, $z=1.20$, $p=0.232$). However, after additional update rounds, the gap becomes statistically significant: +0.40\%p in March ($z=4.67$, $p<0.001$) and +0.53\%p in April ($z=5.88$, $p<0.001$). By contrast, ML-Retrain remains nearly flat over time, with no statistically significant change from February to March ($z=-0.55$, $p=0.584$), from March to April ($z=0.15$, $p=0.879$), or from February to April ($z=-0.39$, $p=0.700$). This suggests that periodic supervised retraining alone does not yield sustained post-deployment improvement under the observed feedback stream.

The popularity-based baseline shows a different pattern. Its CTR increases from 3.06\% in February to 3.47\% in March (+0.41\%p, $z=5.26$, $p<0.001$), but decreases again to 3.30\% in April (-0.17\%p, $z=-2.12$, $p=0.034$). Although its February-to-April change is statistically significant (+0.24\%p, $z=3.04$, $p=0.002$), the month-to-month pattern is not monotonic, and its absolute CTR remains substantially below our method in every month.

GRPO-Update also improves from February to March, increasing from 3.63\% to 4.53\% (+0.90\%p, $z=13.81$, $p<0.001$). However, this improvement is not sustained: its CTR significantly drops from 4.53\% in March to 4.20\% in April (-0.32\%p, $z=-4.70$, $p<0.001$). Although the February-to-April change remains positive overall (+0.58\%p, $z=8.90$, $p<0.001$), the decline after March suggests that naive GRPO-style continual updating may be less stable under policy-shaped contextual-bandit feedback.

Our method significantly outperforms GRPO-Update in every month, with gaps of +3.60\%p in February ($z=52.34$, $p<0.001$), +2.95\%p in March ($z=40.65$, $p<0.001$), and +3.41\%p in April ($z=45.92$, $p<0.001$). This comparison is particularly informative because both methods use the same LLM-based recommender backbone. Therefore, the performance gap mainly reflects the effect of our proposed update design: anchoring the logged exposure, correcting off-policy exposure bias, and tempering ambiguous no-response feedback.

Aggregating all three months, our method obtains 68,660 clicks from 923,443 impressions, corresponding to a CTR of 7.44\%. This outperforms ML-Retrain by +0.34\%p ($z=6.74$, $p<0.001$), GRPO-Update by +3.32\%p ($z=80.08$, $p<0.001$), and the popularity-based model by +4.17\%p ($z=81.13$, $p<0.001$). Together, the aggregate and month-level results suggest that our method not only improves overall CTR, but also enables more stable continual improvement across successive production update rounds.

\vspace{-0.3cm}
\section{Theoretical Analysis}
\label{app:theory}

We formalize the baseline-calibration effect of the logged anchor under
contextual bandit feedback. Throughout this section, expectations are
conditioned on a fixed user context \(q\) and a logged tuple
\((a^{\log},y)\), and we suppress this conditioning for brevity.
The reward function \(r(a)\equiv r(a,q,a^{\log},y)\) may depend on the
logged tuple through the feedback-dependent reward design.

A key distinction in ABPO is that the logged anchor is not optimized as an
on-policy rollout. The logged action \(a^{\log}\) was sampled from the
initial item-level logging distribution \(e_0(\cdot\mid q)\), whereas the
non-anchor rollouts used in the current GRPO update are sampled from the
item-level rollout distribution induced by the previous-step policy:
\[
\rho(\cdot\mid q) := e_{\mathrm{old}}(\cdot\mid q).
\]
The optimized policy \(\pi_\theta\) appears only through the GRPO likelihood
ratio in the final surrogate objective. The analysis below is therefore
with respect to the rollout-generating distribution \(\rho\), not the
optimizing policy \(\pi_\theta\).

For notational simplicity, we write \(e_p(a\mid q)\) for item-level exposure
propensities. In our offline construction, the candidate set \(\mathcal C(q)\)
is deterministically constructed from the context and included in the LLM
prompt. For \(p\in\{0,\mathrm{old}\}\), \(e_p\) is obtained by first computing
a length-normalized average token log-probability \(s_p(a,q)\) for each
candidate item \(a\in\mathcal C(q)\), and then applying a temperature-scaled
softmax:
\[
e_p(a\mid q)
=
\frac{\exp(s_p(a,q)/\tau)}
{\sum_{b\in\mathcal C(q)}\exp(s_p(b,q)/\tau)}.
\]
Thus, \(e_0\) and \(e_{\mathrm{old}}\) are item-level exposure probabilities
over the same candidate item set, not raw token-level likelihoods of the
generated item text.

For clarity, we analyze the centered reward
\(r(a_j)-\bar r_{\mathrm{anc}}^\omega\). In implementation, ABPO further
divides this quantity by a positive weighted standard deviation, as in the
main text. This normalization changes the scale of the advantage but not
the direction of the baseline-induced shift analyzed below.

\paragraph{Notation.}
Let
\[
V^\rho(q)
=
\mathbb{E}_{a\sim \rho(\cdot\mid q)}[r(a)]
\]
denote the expected reward under the rollout distribution. For the \(G-1\)
policy-sampled non-anchor rollouts
\[
a_2,\ldots,a_G \overset{\mathrm{iid}}{\sim} \rho(\cdot\mid q),
\]
define the rollout-only baseline
\begin{equation}
\bar r_{\mathrm{roll}}
=
\frac{1}{G-1}\sum_{j=2}^{G} r(a_j).
\label{eq:rollout_only_baseline}
\end{equation}
The logged anchor contributes only to the group-relative baseline. Given
a nonnegative stopped-gradient anchor weight \(\omega\ge 0\), define the
weighted anchored baseline as
\begin{equation}
\bar r_{\mathrm{anc}}^{\omega}
=
\frac{
\omega r(a^{\log})+\sum_{j=2}^{G}r(a_j)
}{
\omega+G-1
}.
\label{eq:weighted_anchor_baseline_theory}
\end{equation}
The unweighted anchor baseline is recovered by setting \(\omega=1\).
In ABPO, \(\omega\) corresponds to the self-normalized logged-anchor weight
\(\hat w^{\log}\), computed from an item-level propensity ratio and treated
as stop-gradient.

The centered advantages induced by this baseline are
\begin{equation}
A_j^{\omega}
=
r(a_j)-\bar r_{\mathrm{anc}}^{\omega},
\qquad j=2,\ldots,G.
\label{eq:nonanchor_advantage_theory}
\end{equation}
There is no corresponding surrogate advantage for the logged anchor
\(a^{\log}\).

In the implementation, the final advantage additionally uses the weighted
standard deviation
\begin{equation}
\sigma_{\omega}
=
\sqrt{
\frac{
\omega \left(r(a^{\log})-\bar r_{\mathrm{anc}}^{\omega}\right)^2
+
\sum_{j=2}^{G}
\left(r(a_j)-\bar r_{\mathrm{anc}}^{\omega}\right)^2
}{
\omega+G-1
}
+
\epsilon_{\mathrm{std}}
},
\label{eq:weighted_std_theory}
\end{equation}
and applies
\[
\tilde A_j^{\omega}
=
\frac{r(a_j)-\bar r_{\mathrm{anc}}^{\omega}}{\sigma_{\omega}},
\qquad j=2,\ldots,G.
\]
Since \(\sigma_{\omega}>0\), the sign of the centered advantage shift is
preserved.

\subsection{Baseline Calibration}

\begin{lemma}[Rollout-Only Baseline is Centered at the Rollout Distribution]
\label{lem:rollout_baseline_centered}
The expected rollout-only baseline satisfies
\begin{equation}
\mathbb{E}[\bar r_{\mathrm{roll}}]
=
V^\rho(q)
=
\mathbb{E}_{a\sim \rho(\cdot\mid q)}[r(a)].
\label{eq:rollout_baseline_centered}
\end{equation}
\end{lemma}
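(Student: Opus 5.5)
The argument is linearity of expectation applied to the definition of the rollout-only baseline in Eq.~\eqref{eq:rollout_only_baseline}. Throughout, I condition on the fixed context \(q\) and the logged tuple \((a^{\log},y)\), as stipulated at the start of this section. Under that conditioning the reward function \(r(\cdot)\equiv r(\cdot,q,a^{\log},y)\) is a fixed deterministic function of the action. Hence each \(r(a_j)\) is a measurable function of the single random variable \(a_j\) and of nothing else.

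The steps are as follows.

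\textbf{Step 1 (linearity).} Write
\[
\mathbb{E}[\bar r_{\mathrm{roll}}]=\frac{1}{G-1}\sum_{j=2}^{G}\mathbb{E}[r(a_j)].
\]
This is valid because the sum is finite and each expectation is finite. Finiteness holds because the candidate set \(\mathcal C(q)\) is finite, so \(r\) takes finitely many values on the support of \(\rho\). This covers the match, format and self-certainty components alike. For the self-certainty term, I fix the evaluating parameters to those used when the rewards are computed, so \(r\) is a fixed function during this expectation.

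\textbf{Step 2 (identical distribution).} Each \(a_j\) is marginally distributed as \(\rho(\cdot\mid q)=e_{\mathrm{old}}(\cdot\mid q)\). Therefore \(\mathbb{E}[r(a_j)]=\mathbb{E}_{a\sim\rho(\cdot\mid q)}[r(a)]=V^\rho(q)\) for every \(j=2,\ldots,G\). Only identical marginals are needed here; independence is not used.

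\textbf{Step 3 (average).} Averaging \(G-1\) identical terms gives \(V^\rho(q)\), which is Eq.~\eqref{eq:rollout_baseline_centered}.

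The only point needing care is conceptual rather than computational. The expectation must be taken under the rollout-generating distribution \(\rho\), not under the optimized \(\pi_\theta\) or the logging distribution \(e_0\). In addition, the reward must not itself be random in a way correlated with the rollouts, beyond its dependence on \(a_j\). I will state explicitly that the conditioning on \((q,a^{\log},y)\) makes \(r\) deterministic in \(a\). The anchor \(a^{\log}\) does not enter \(\bar r_{\mathrm{roll}}\) at all, so no propensity correction is needed at this stage.
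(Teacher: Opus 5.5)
Your proof is correct and matches the paper's argument: each rollout has marginal law $\rho(\cdot\mid q)$, so $\mathbb{E}[r(a_j)]=V^\rho(q)$ for every $j$, and linearity of expectation then gives the average. Your extra remarks (finite support for integrability, a fixed reward function, and that only identical marginals rather than independence are needed) are valid refinements that the paper leaves implicit.
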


\begin{proof}
Since \(a_2,\ldots,a_G\) are sampled i.i.d. from \(\rho(\cdot\mid q)\),
each rollout satisfies
\[
\mathbb{E}[r(a_j)]
=
V^\rho(q),
\qquad j=2,\ldots,G.
\]
Therefore, by linearity of expectation,
\[
\mathbb{E}[\bar r_{\mathrm{roll}}]
=
\mathbb{E}\left[
\frac{1}{G-1}\sum_{j=2}^{G}r(a_j)
\right]
=
\frac{1}{G-1}\sum_{j=2}^{G}V^\rho(q)
=
V^\rho(q).
\]
\end{proof}

Lemma~\ref{lem:rollout_baseline_centered} shows that without anchoring,
the group-relative baseline is centered at the average reward of the
current rollout distribution. Thus, although the reward function may depend
on the logged tuple \((a^{\log},y)\), the realized logged reward
\(r(a^{\log})\) does not explicitly enter the rollout-only baseline.

\begin{theorem}[Weighted Anchoring Shifts the Baseline Toward the Logged Reward]
\label{thm:weighted_anchor_calibration}
Condition on a fixed nonnegative stopped-gradient anchor weight \(\omega\).
Then the expected anchored baseline satisfies
\begin{equation}
\mathbb{E}[\bar r_{\mathrm{anc}}^{\omega}\mid \omega]
=
\frac{\omega}{\omega+G-1}r(a^{\log})
+
\frac{G-1}{\omega+G-1}V^\rho(q).
\label{eq:weighted_anchored_expectation}
\end{equation}
Equivalently, if
\[
\alpha_\omega
=
\frac{\omega}{\omega+G-1},
\]
then
\begin{equation}
\mathbb{E}[\bar r_{\mathrm{anc}}^{\omega}\mid \omega]
=
\alpha_\omega r(a^{\log})
+
(1-\alpha_\omega)V^\rho(q).
\label{eq:weighted_convex_combination}
\end{equation}
Thus, the anchored baseline is a convex combination of the realized logged
reward and the rollout distribution's expected reward.

Moreover, define the deviations from the logged reward as
\begin{equation}
\delta_{\mathrm{roll}}
=
\mathbb{E}[\bar r_{\mathrm{roll}}]-r(a^{\log}),
\qquad
\delta_{\mathrm{anc}}^{\omega}
=
\mathbb{E}[\bar r_{\mathrm{anc}}^{\omega}\mid \omega]-r(a^{\log}).
\label{eq:weighted_deviation_defs}
\end{equation}
Then
\begin{equation}
\delta_{\mathrm{anc}}^{\omega}
=
\frac{G-1}{\omega+G-1}\delta_{\mathrm{roll}}.
\label{eq:weighted_deviation_shrinkage}
\end{equation}
In particular, for the unweighted case \(\omega=1\),
\begin{equation}
\delta_{\mathrm{anc}}^{1}
=
\frac{G-1}{G}\delta_{\mathrm{roll}}.
\label{eq:unweighted_deviation_shrinkage}
\end{equation}
\end{theorem}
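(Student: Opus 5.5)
The plan is a direct computation from the definition \eqref{eq:weighted_anchor_baseline_theory}, using Lemma~\ref{lem:rollout_baseline_centered}. We condition on $\omega$ and on the logged tuple $(q,a^{\log},y)$. Then the numerator's anchor term $\omega\, r(a^{\log})$ and the denominator $\omega+G-1$ are deterministic. The only randomness comes from the non-anchor rollouts $a_2,\ldots,a_G\overset{\mathrm{iid}}{\sim}\rho(\cdot\mid q)$.

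One point needs care, and it is the only real obstacle. In ABPO, $\omega=\hat w^{\log}$ is self-normalized across the mini-batch, and it depends on $e_{\mathrm{old}}$ and $e_0$, not on the sampled rollouts. Conditional on $\omega$, we therefore need the rollouts to still be i.i.d. from $\rho$. We will state this explicitly: the weight is stop-gradient and computed from propensities of the logged items only, so it is independent of the current group's rollouts. This lets us pull $1/(\omega+G-1)$ out of the conditional expectation.

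With that in place, linearity gives
\[
\mathbb{E}[\bar r_{\mathrm{anc}}^{\omega}\mid\omega]=\frac{\omega\, r(a^{\log})+\sum_{j=2}^{G}\mathbb{E}[r(a_j)]}{\omega+G-1}.
\]
Lemma~\ref{lem:rollout_baseline_centered}, or its per-rollout form $\mathbb{E}[r(a_j)]=V^\rho(q)$, turns the sum into $(G-1)V^\rho(q)$. This yields \eqref{eq:weighted_anchored_expectation}. Setting $\alpha_\omega=\omega/(\omega+G-1)\in[0,1)$, which holds since $\omega\ge 0$ and $G\ge 2$, gives the convex-combination form \eqref{eq:weighted_convex_combination}.

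For the deviation statement, we subtract $r(a^{\log})=\alpha_\omega r(a^{\log})+(1-\alpha_\omega)r(a^{\log})$ from both sides. This gives
\[
\delta^{\omega}_{\mathrm{anc}}=(1-\alpha_\omega)\bigl(V^\rho(q)-r(a^{\log})\bigr).
\]
By Lemma~\ref{lem:rollout_baseline_centered}, $V^\rho(q)-r(a^{\log})=\delta_{\mathrm{roll}}$, and $1-\alpha_\omega=(G-1)/(\omega+G-1)$, which is \eqref{eq:weighted_deviation_shrinkage}. Substituting $\omega=1$ gives \eqref{eq:unweighted_deviation_shrinkage}.
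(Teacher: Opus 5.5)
Your proposal is correct and follows essentially the same route as the paper: condition on $\omega$, use linearity with $\mathbb{E}[r(a_j)]=V^\rho(q)$ to get the convex combination, then subtract $r(a^{\log})$ and use Lemma~\ref{lem:rollout_baseline_centered} to identify $\delta_{\mathrm{roll}}=V^\rho(q)-r(a^{\log})$. Your remark that $\hat w^{\log}$ depends only on logged-item propensities, and is therefore independent of the current group's rollouts, makes explicit a step the paper leaves implicit; strictly, that independence is what gives $\mathbb{E}[r(a_j)\mid\omega]=V^\rho(q)$, whereas pulling out $1/(\omega+G-1)$ needs only the conditioning itself.
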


\begin{proof}
Using Eq.~\eqref{eq:weighted_anchor_baseline_theory},
\begin{align}
\mathbb{E}[\bar r_{\mathrm{anc}}^{\omega}\mid \omega]
&=
\mathbb{E}\left[
\frac{
\omega r(a^{\log})+\sum_{j=2}^{G}r(a_j)
}{
\omega+G-1
}
\,\middle|\, \omega
\right] \notag \\
&=
\frac{
\omega r(a^{\log})
+
\sum_{j=2}^{G}\mathbb{E}[r(a_j)]
}{
\omega+G-1
} \notag \\
&=
\frac{
\omega r(a^{\log})
+
(G-1)V^\rho(q)
}{
\omega+G-1
}.
\end{align}
This proves Eq.~\eqref{eq:weighted_anchored_expectation}. Rewriting with
\(\alpha_\omega=\omega/(\omega+G-1)\) gives
Eq.~\eqref{eq:weighted_convex_combination}.

By Lemma~\ref{lem:rollout_baseline_centered},
\[
\delta_{\mathrm{roll}}
=
V^\rho(q)-r(a^{\log}).
\]
Similarly,
\begin{align}
\delta_{\mathrm{anc}}^{\omega}
&=
\frac{
\omega r(a^{\log})+(G-1)V^\rho(q)
}{
\omega+G-1
}
-
r(a^{\log}) \notag \\
&=
\frac{G-1}{\omega+G-1}
\left(
V^\rho(q)-r(a^{\log})
\right) \notag \\
&=
\frac{G-1}{\omega+G-1}\delta_{\mathrm{roll}}.
\end{align}
Setting \(\omega=1\) yields Eq.~\eqref{eq:unweighted_deviation_shrinkage}.
\end{proof}

Theorem~\ref{thm:weighted_anchor_calibration} formalizes the calibration
role of the logged anchor. The anchor does not create a direct off-policy
surrogate term. Instead, its reward shifts the baseline used to evaluate
the current policy-sampled rollouts. The strength of this shift is
controlled by the stopped-gradient weight \(\omega\).

\subsection{Effect on Non-Anchor Rollout Advantages}

Since the logged anchor is not optimized directly, the relevant quantity
is not the anchor's own advantage, but how the anchored baseline changes
the advantages of the policy-sampled rollouts.

\begin{proposition}[Non-Anchor Advantage Shift]
\label{prop:nonanchor_adv_shift}
Let
\[
A_j^{\mathrm{roll}}
=
r(a_j)-\bar r_{\mathrm{roll}},
\qquad j=2,\ldots,G,
\]
be the rollout-only group-relative advantage. Under the weighted anchored
baseline, the non-anchor advantage satisfies
\begin{equation}
A_j^{\omega}
=
A_j^{\mathrm{roll}}
+
\frac{\omega}{\omega+G-1}
\left(
\bar r_{\mathrm{roll}}-r(a^{\log})
\right),
\qquad j=2,\ldots,G.
\label{eq:nonanchor_advantage_shift}
\end{equation}
Taking expectation over the policy-sampled rollouts gives
\begin{equation}
\mathbb{E}[A_j^{\omega}\mid \omega]
=
\frac{\omega}{\omega+G-1}
\left(
V^\rho(q)-r(a^{\log})
\right),
\qquad j=2,\ldots,G.
\label{eq:expected_nonanchor_advantage_shift}
\end{equation}
\end{proposition}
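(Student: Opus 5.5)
The plan is a direct algebraic decomposition of the weighted anchored baseline, followed by an expectation step that reuses Lemma~\ref{lem:rollout_baseline_centered}. Throughout, I condition on the stopped-gradient weight \(\omega\ge 0\), so it can be treated as a constant. As in Theorem~\ref{thm:weighted_anchor_calibration}, \(r(a^{\log})\) is deterministic given the fixed logged tuple, and only \(a_2,\ldots,a_G\) are random.

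\textbf{Step 1 (pathwise identity).} Write \(\sum_{j=2}^G r(a_j)=(G-1)\bar r_{\mathrm{roll}}\) in Eq.~\eqref{eq:weighted_anchor_baseline_theory}. This gives
\[
\bar r_{\mathrm{anc}}^{\omega}=\frac{\omega r(a^{\log})+(G-1)\bar r_{\mathrm{roll}}}{\omega+G-1}
=\bar r_{\mathrm{roll}}-\frac{\omega}{\omega+G-1}\bigl(\bar r_{\mathrm{roll}}-r(a^{\log})\bigr).
\]
The second equality follows by adding and subtracting \(\bar r_{\mathrm{roll}}\) and using \(\tfrac{G-1}{\omega+G-1}=1-\alpha_\omega\). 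Substituting into \(A_j^{\omega}=r(a_j)-\bar r_{\mathrm{anc}}^{\omega}\) and recognizing \(r(a_j)-\bar r_{\mathrm{roll}}=A_j^{\mathrm{roll}}\) yields Eq.~\eqref{eq:nonanchor_advantage_shift}. This identity holds for every realization of the rollouts, so it needs no distributional assumption.

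\textbf{Step 2 (expectation).} Take the conditional expectation given \(\omega\) of the identity from Step 1. First I need \(\mathbb{E}[A_j^{\mathrm{roll}}]=0\). By the i.i.d.\ assumption, \(\mathbb{E}[r(a_j)]=V^\rho(q)\), and by Lemma~\ref{lem:rollout_baseline_centered}, \(\mathbb{E}[\bar r_{\mathrm{roll}}]=V^\rho(q)\). Their difference therefore has mean zero. Note that this requires no independence between \(r(a_j)\) and \(\bar r_{\mathrm{roll}}\), only linearity of expectation. The shift term is linear in \(\bar r_{\mathrm{roll}}\), and its coefficient \(\alpha_\omega\) is fixed given \(\omega\). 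Its expectation is therefore \(\alpha_\omega\bigl(V^\rho(q)-r(a^{\log})\bigr)\), which gives Eq.~\eqref{eq:expected_nonanchor_advantage_shift}. As a cross-check, the same result follows by subtracting Eq.~\eqref{eq:weighted_anchored_expectation} from \(V^\rho(q)\).

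\textbf{Main obstacle.} The only delicate point is the conditioning on \(\omega\). In ABPO, \(\hat w^{\log}\) is self-normalized over the mini-batch, so in principle it could correlate with the rollouts. I handle this the same way as the theorem: the statement conditions on \(\omega\), and \(\omega\) depends only on propensities of the logged items, not on the sampled \(a_2,\ldots,a_G\). I will therefore assume, and state explicitly, that conditioning on \(\omega\) leaves the rollouts i.i.d.\ from \(\rho(\cdot\mid q)\). With that assumption, both steps go through by linearity of expectation.
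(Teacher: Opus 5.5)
Your proposal is correct and follows the paper's proof essentially step for step. Both arguments obtain the pathwise identity by substituting $\sum_{k=2}^{G} r(a_k)=(G-1)\bar r_{\mathrm{roll}}$ into the weighted baseline, then take expectations using linearity and Lemma~\ref{lem:rollout_baseline_centered}; your explicit assumption that conditioning on $\omega$ leaves $a_2,\ldots,a_G$ i.i.d.\ from $\rho(\cdot\mid q)$ is one the paper makes only implicitly, so stating it just makes the argument slightly more careful.
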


\begin{proof}
By definition,
\[
A_j^{\omega}
=
r(a_j)-\bar r_{\mathrm{anc}}^{\omega},
\qquad
A_j^{\mathrm{roll}}
=
r(a_j)-\bar r_{\mathrm{roll}}.
\]
Therefore,
\begin{align}
A_j^{\omega}-A_j^{\mathrm{roll}}
&=
\bar r_{\mathrm{roll}}-\bar r_{\mathrm{anc}}^{\omega} \notag \\
&=
\bar r_{\mathrm{roll}}
-
\frac{
\omega r(a^{\log})+\sum_{k=2}^{G}r(a_k)
}{
\omega+G-1
}.
\end{align}
Since
\[
\sum_{k=2}^{G}r(a_k)
=
(G-1)\bar r_{\mathrm{roll}},
\]
we obtain
\begin{align}
A_j^{\omega}-A_j^{\mathrm{roll}}
&=
\bar r_{\mathrm{roll}}
-
\frac{
\omega r(a^{\log})+(G-1)\bar r_{\mathrm{roll}}
}{
\omega+G-1
} \notag \\
&=
\frac{\omega}{\omega+G-1}
\left(
\bar r_{\mathrm{roll}}-r(a^{\log})
\right).
\end{align}
This proves Eq.~\eqref{eq:nonanchor_advantage_shift}. Taking expectation
and using Lemma~\ref{lem:rollout_baseline_centered} gives
Eq.~\eqref{eq:expected_nonanchor_advantage_shift}.
\end{proof}

Proposition~\ref{prop:nonanchor_adv_shift} shows that the logged action
itself is not assigned a surrogate advantage. Instead, the logged reward
shifts the advantages of the non-anchor rollouts through the baseline.

\subsection{Implications for Positive- and No-Response Feedback}

The sign of the baseline shift is determined by the relative ordering
between the logged reward \(r(a^{\log})\) and the rollout distribution's
expected reward \(V^\rho(q)\).

\begin{corollary}[Positive-Response Calibration]
\label{cor:positive_calibration}
Suppose a positive-response logged exposure satisfies
\begin{equation}
r(a^{\log}) > V^\rho(q).
\label{eq:positive_condition_weighted}
\end{equation}
Then the weighted anchored baseline is higher in expectation than the
rollout-only baseline:
\begin{equation}
\mathbb{E}[\bar r_{\mathrm{anc}}^{\omega}\mid \omega]
-
\mathbb{E}[\bar r_{\mathrm{roll}}]
=
\frac{\omega}{\omega+G-1}
\left(
r(a^{\log})-V^\rho(q)
\right)
>0.
\label{eq:positive_baseline_shift_weighted}
\end{equation}
Consequently, the expected centered advantages of the policy-sampled
rollouts are shifted downward by
\begin{equation}
\mathbb{E}[A_j^{\omega}\mid \omega]
=
-
\frac{\omega}{\omega+G-1}
\left(
r(a^{\log})-V^\rho(q)
\right)
<0,
\qquad j=2,\ldots,G.
\label{eq:positive_advantage_shift_weighted}
\end{equation}
\end{corollary}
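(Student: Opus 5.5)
The corollary is a direct specialization of Theorem~\ref{thm:weighted_anchor_calibration} and Proposition~\ref{prop:nonanchor_adv_shift}, so the plan is simply to read off the two displayed identities and then use the sign hypothesis \eqref{eq:positive_condition_weighted} together with positivity of the mixing coefficient.

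For the baseline shift, I will subtract the two expected baselines. Lemma~\ref{lem:rollout_baseline_centered} gives $\mathbb{E}[\bar r_{\mathrm{roll}}]=V^\rho(q)$. Theorem~\ref{thm:weighted_anchor_calibration}, via Eq.~\eqref{eq:weighted_convex_combination}, gives $\mathbb{E}[\bar r_{\mathrm{anc}}^{\omega}\mid\omega]=\alpha_\omega r(a^{\log})+(1-\alpha_\omega)V^\rho(q)$. Hence the difference is $\alpha_\omega\bigl(r(a^{\log})-V^\rho(q)\bigr)$, which is the identity in \eqref{eq:positive_baseline_shift_weighted}.

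For the advantages, I will invoke Eq.~\eqref{eq:expected_nonanchor_advantage_shift} of Proposition~\ref{prop:nonanchor_adv_shift} directly and flip the sign of the bracket to obtain \eqref{eq:positive_advantage_shift_weighted}. Alternatively, the same value follows by linearity, since $\mathbb{E}[A_j^\omega\mid\omega]=\mathbb{E}[r(a_j)]-\mathbb{E}[\bar r_{\mathrm{anc}}^\omega\mid\omega]=V^\rho(q)-\mathbb{E}[\bar r_{\mathrm{anc}}^\omega\mid\omega]$. Here the expectation is over $a_2,\dots,a_G$ with $\omega$ held fixed, which is legitimate because $\omega$ is stop-gradient and conditioned on.

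The only point requiring care is the strict inequality. It needs $\alpha_\omega>0$, i.e.\ $\omega>0$, whereas the setup only assumes $\omega\ge 0$. If $\omega=0$, both shifts vanish and the corollary's strict inequalities fail. I will therefore state that the strict sign conclusions hold for $\omega>0$, and note that this is automatic in ABPO. The SNIPS weight $\hat w^{\log}=w^{\log}/(\bar w^{\log}_y+\delta)$ is strictly positive, because both item-level propensities are softmax probabilities and hence positive, and $\delta\ge 0$ with $\bar w_y^{\log}>0$.

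Given $\omega>0$ and $G\ge 2$, we have $\alpha_\omega\in(0,1)$. Multiplying the strictly positive gap $r(a^{\log})-V^\rho(q)$ by $\alpha_\omega$ then yields both strict inequalities. Finally, I will remark that dividing by $\sigma_\omega>0$ preserves the sign of the shift for the normalized advantages $\tilde A_j^\omega$, but does not preserve its magnitude.
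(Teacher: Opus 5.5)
Your proposal is correct and follows the paper's own proof: the paper also obtains the baseline shift by combining Eq.~\eqref{eq:weighted_anchored_expectation} with Lemma~\ref{lem:rollout_baseline_centered}, reads the advantage shift off Proposition~\ref{prop:nonanchor_adv_shift}, and uses the hypothesis $r(a^{\log})>V^\rho(q)$ for the sign. Your point that the strict inequalities need $\omega>0$ is a genuine refinement, since the paper's standing assumption is only $\omega\ge 0$ and its proof leaves this implicit; it holds automatically for the SNIPS weight because softmax propensities are strictly positive. One small correction: holding $\omega$ fixed while averaging over rollouts is legitimate not because $\omega$ is stop-gradient, which concerns only differentiation, but because $\omega$ depends only on the logged anchors and propensities and is therefore independent of $a_2,\dots,a_G$.
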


\begin{proof}
The baseline shift follows directly from
Eq.~\eqref{eq:weighted_anchored_expectation} and
Lemma~\ref{lem:rollout_baseline_centered}:
\[
\mathbb{E}[\bar r_{\mathrm{anc}}^{\omega}\mid \omega]
-
\mathbb{E}[\bar r_{\mathrm{roll}}]
=
\frac{\omega}{\omega+G-1}
\left(
r(a^{\log})-V^\rho(q)
\right).
\]
Under condition~\eqref{eq:positive_condition_weighted}, this quantity is
positive. Eq.~\eqref{eq:positive_advantage_shift_weighted} follows from
Proposition~\ref{prop:nonanchor_adv_shift}.
\end{proof}

Thus, when a positive-response logged item receives a higher reward than
the rollout distribution's average reward, anchoring raises the comparison
baseline. This reduces the relative advantage of policy rollouts that do
not improve upon the positive logged reference, without directly applying
a policy-gradient update to the logged action itself.

\begin{corollary}[No-Response Buffering]
\label{cor:noresponse_buffering}
Suppose a no-response logged exposure satisfies
\begin{equation}
r(a^{\log}) < V^\rho(q).
\label{eq:noresponse_condition_weighted}
\end{equation}
Then the weighted anchored baseline is lower in expectation than the
rollout-only baseline:
\begin{equation}
\mathbb{E}[\bar r_{\mathrm{anc}}^{\omega}\mid \omega]
-
\mathbb{E}[\bar r_{\mathrm{roll}}]
=
\frac{\omega}{\omega+G-1}
\left(
r(a^{\log})-V^\rho(q)
\right)
<0.
\label{eq:noresponse_baseline_shift_weighted}
\end{equation}
Consequently, the expected centered advantages of the policy-sampled
rollouts are shifted upward by
\begin{equation}
\mathbb{E}[A_j^{\omega}\mid \omega]
=
\frac{\omega}{\omega+G-1}
\left(
V^\rho(q)-r(a^{\log})
\right)
>0,
\qquad j=2,\ldots,G.
\label{eq:noresponse_advantage_shift_weighted}
\end{equation}
\end{corollary}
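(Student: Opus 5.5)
The argument mirrors the proof of Corollary~\ref{cor:positive_calibration} with the sign reversed, so I will reuse the same two ingredients: Eq.~\eqref{eq:weighted_anchored_expectation} from Theorem~\ref{thm:weighted_anchor_calibration} and Lemma~\ref{lem:rollout_baseline_centered}.

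\emph{Baseline shift.} Condition on the stopped-gradient weight $\omega\ge 0$. Subtracting $\mathbb{E}[\bar r_{\mathrm{roll}}]=V^\rho(q)$ (Lemma~\ref{lem:rollout_baseline_centered}) from Eq.~\eqref{eq:weighted_anchored_expectation} gives
\[
\mathbb{E}[\bar r_{\mathrm{anc}}^{\omega}\mid \omega]-\mathbb{E}[\bar r_{\mathrm{roll}}]
=\Bigl(\tfrac{\omega}{\omega+G-1}-1\Bigr)V^\rho(q)+\tfrac{\omega}{\omega+G-1}r(a^{\log})
=\tfrac{\omega}{\omega+G-1}\bigl(r(a^{\log})-V^\rho(q)\bigr).
\]
This is the identity in Eq.~\eqref{eq:noresponse_baseline_shift_weighted}.

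\emph{Sign.} The denominator satisfies $\omega+G-1\ge G-1\ge 1>0$, assuming $G\ge 2$ so that at least one non-anchor rollout exists. The factor $\omega/(\omega+G-1)$ is therefore nonnegative. Condition~\eqref{eq:noresponse_condition_weighted} makes the bracket strictly negative, so the product is negative. Strict negativity requires $\omega>0$. Otherwise the shift is exactly zero, since the anchor then carries no weight. In ABPO, $\hat w^{\log}=w^{\log}/(\bar w^{\log}_y+\delta)$ is strictly positive because both item-level softmax propensities $e_{\mathrm{old}}$ and $e_0$ are strictly positive on the candidate set. I will state this positivity explicitly as the hypothesis under which the strict inequality holds; this is the only delicate point.

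\emph{Advantage shift.} Eq.~\eqref{eq:expected_nonanchor_advantage_shift} of Proposition~\ref{prop:nonanchor_adv_shift} gives $\mathbb{E}[A_j^\omega\mid\omega]=\frac{\omega}{\omega+G-1}(V^\rho(q)-r(a^{\log}))$ for each $j=2,\ldots,G$. Under Eq.~\eqref{eq:noresponse_condition_weighted} and $\omega>0$, this is strictly positive, which is Eq.~\eqref{eq:noresponse_advantage_shift_weighted}.

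As the notation paragraph noted, dividing by $\sigma_\omega>0$ preserves the sign of this shift, so the conclusion carries over to the normalized advantages $\tilde A_j^\omega$ in direction but not in magnitude.
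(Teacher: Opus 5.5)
Your argument is correct and follows the paper's route: Theorem~\ref{thm:weighted_anchor_calibration} together with Lemma~\ref{lem:rollout_baseline_centered} gives the baseline gap, and Proposition~\ref{prop:nonanchor_adv_shift} gives the advantage shift. Your remark that the strict inequality needs $\omega>0$ (which ABPO guarantees because the softmax propensities are positive) is a genuine refinement that the paper leaves implicit.

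One slip: in your middle expression the coefficient of $V^\rho(q)$ should be $\frac{G-1}{\omega+G-1}-1=-\frac{\omega}{\omega+G-1}$, not $\frac{\omega}{\omega+G-1}-1$. The latter equals $-\frac{G-1}{\omega+G-1}$ and does not produce your (correct) final line.
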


\begin{proof}
The proof is identical to that of
Corollary~\ref{cor:positive_calibration}, with the inequality direction
reversed under condition~\eqref{eq:noresponse_condition_weighted}.
\end{proof}

Thus, when a no-response logged item receives a lower reward than the
rollout distribution's average reward, anchoring lowers the baseline. This
can buffer alternative rollouts from being overly penalized under ambiguous
no-response feedback. Again, the effect is mediated only through the
baseline, not through a direct off-policy surrogate term for the logged
anchor.

\subsection{Anchor-Level Interpretation of SNIPS}
\label{sec:snips_consistency}

We do not use SNIPS to characterize the full ABPO objective. SNIPS is used
only to determine the stopped-gradient coefficient with which the logged
reward contributes to the anchored baseline in
Eq.~\eqref{eq:weighted_anchor_baseline_theory}. The logged anchor is not
assigned a GRPO likelihood ratio or a direct off-policy surrogate advantage.

\paragraph{Item-level propensity ratio.}
Let \(\rho=e_{\mathrm{old}}\) denote the item-level rollout distribution
induced by the previous-step policy, and let \(e_0\) denote the item-level
logging distribution used to sample the logged exposure. For any fixed
feedback type \(y\in\{0,1\}\), let \(f_y(a,q)\) denote a bounded scalar
function of the logged action, such as the feedback-specific logged reward.
Define the item-level importance ratio
\begin{equation}
w(a,q)
=
\frac{
e_{\mathrm{old}}(a\mid q)
}{
e_0(a\mid q)
}.
\label{eq:anchor_ips_ratio}
\end{equation}
Both \(e_0\) and \(e_{\mathrm{old}}\) are defined over the same deterministic
candidate set \(\mathcal C(q)\) included in the prompt. They are obtained by
softmax-normalizing length-normalized LLM token scores over candidate items:
\[
e_p(a\mid q)
=
\frac{\exp(s_p(a,q)/\tau)}
{\sum_{b\in\mathcal C(q)}\exp(s_p(b,q)/\tau)},
\qquad
p\in\{0,\mathrm{old}\}.
\]
Thus, Eq.~\eqref{eq:anchor_ips_ratio} is an item-level propensity ratio,
not a ratio of raw token-level likelihoods or sequence probabilities.

Under the absolute-continuity condition
\begin{equation}
e_{\mathrm{old}}(a\mid q)>0
\quad\Rightarrow\quad
e_0(a\mid q)>0,
\label{eq:absolute_continuity}
\end{equation}
we have the anchor-level IPS identity
\begin{equation}
\mathbb{E}_{a\sim e_0(\cdot\mid q)}
\left[
w(a,q)f_y(a,q)
\right]
=
\mathbb{E}_{a\sim e_{\mathrm{old}}(\cdot\mid q)}
\left[
f_y(a,q)
\right].
\label{eq:anchor_ips_identity}
\end{equation}
This identity only concerns the scalar logged-anchor contribution under
item-level propensities. It does not imply that the complete ABPO objective
is an unbiased off-policy estimator.

\paragraph{Practical self-normalized weighting.}
In practice, for each logged example \(i\), we use the stopped-gradient
item-level propensity ratio
\begin{equation}
w_i^{\log}
=
\operatorname{sg}
\left[
\frac{
e_{\mathrm{old}}(a_i^{\log}\mid q_i)
}{
e_0(a_i^{\log}\mid q_i)
}
\right].
\label{eq:practical_ips_weight}
\end{equation}
To reduce scale imbalance across feedback types, we self-normalize the
weights separately for positive-response and no-response anchors.
Let \(\mathcal B_1\) and \(\mathcal B_0\) denote the corresponding subsets in
a mini-batch. For \(i\in\mathcal B_y\), define
\begin{equation}
\bar w_y^{\log}
=
\frac{1}{|\mathcal B_y|}
\sum_{k\in\mathcal B_y} w_k^{\log},
\qquad
\hat w_i^{\log}
=
\frac{
w_i^{\log}
}{
\bar w_y^{\log}+\delta
}.
\label{eq:practical_snips_weight}
\end{equation}
The feedback-stratified normalization is used as a practical scale-control
device across positive- and no-response anchors, rather than as a claim of
unbiased counterfactual estimation within each feedback-conditioned stratum.

The resulting weight is used only in the anchored baseline:
\begin{equation}
\bar r^{\,w}_{i,\mathrm{anc}}
=
\frac{
\hat w_i^{\log} r_i^{\log}
+
\sum_{j=2}^{G}r_i^j
}{
\hat w_i^{\log}+G-1
}.
\label{eq:practical_weighted_baseline}
\end{equation}
The corresponding weighted standard deviation is
\begin{equation}
\sigma_i
=
\sqrt{
\frac{
\hat w_i^{\log}
\left(r_i^{\log}-\bar r^{\,w}_{i,\mathrm{anc}}\right)^2
+
\sum_{j=2}^{G}
\left(r_i^j-\bar r^{\,w}_{i,\mathrm{anc}}\right)^2
}{
\hat w_i^{\log}+G-1
}
+
\epsilon_{\mathrm{std}}
}.
\label{eq:practical_weighted_std}
\end{equation}
The advantages entering the GRPO surrogate are computed only for
policy-sampled rollouts:
\begin{equation}
A_{i,\mathrm{bandit}}^j
=
\frac{
r_i^j-\bar r^{\,w}_{i,\mathrm{anc}}
}{
\sigma_i
},
\qquad j=2,\ldots,G.
\label{eq:practical_nonanchor_adv}
\end{equation}
There is no term \(A_{i,\mathrm{bandit}}^1\) in the final surrogate, since
the logged anchor is not optimized as a policy-sampled rollout.

\paragraph{Self-normalized anchor-level target.}
For a fixed feedback stratum \(y\), consider logged examples
\(i\in\mathcal B_y\) and a scalar anchor-level quantity
\(f_i=f_y(a_i^{\log},q_i)\). The self-normalized estimator is
\begin{equation}
\hat\mu_{\mathrm{SNIPS}}^{(y)}
=
\frac{
\sum_{i\in\mathcal B_y} w_i^{\log} f_i
}{
\sum_{i\in\mathcal B_y} w_i^{\log}
+
|\mathcal B_y|\delta
}.
\label{eq:snips_estimator}
\end{equation}
Under standard integrability assumptions and i.i.d. sampling within the
feedback stratum, the weak law of large numbers gives
\begin{equation}
\hat\mu_{\mathrm{SNIPS}}^{(y)}
\xrightarrow{p}
\frac{
\mathbb{E}\left[w^{\log} f_y(a^{\log},q)\mid y\right]
}{
\mathbb{E}\left[w^{\log}\mid y\right]+\delta
}.
\label{eq:snips_limit}
\end{equation}
When \(\delta=0\) and exact item-level propensities are available, this
reduces to the usual self-normalized IPS target at the anchor level.

\paragraph{Scope limitation.}
The SNIPS result above should be interpreted only as an anchor-level
exposure-calibration statement based on item-level propensities. Because the
full objective also contains policy-sampled non-anchor rollouts, GRPO
likelihood ratios, feedback-stratified normalization, and reward
normalization within anchor-augmented groups, we do not claim that SNIPS
makes the complete ABPO objective an unbiased estimator of a counterfactual
bandit objective. Instead, SNIPS is used as a practical, stopped-gradient
exposure-correction coefficient for the logged reward's contribution to the
group-relative baseline.


\end{document}